\documentclass{article}

\usepackage[preprint]{icml2026}

\usepackage[utf8]{inputenc}
\usepackage[T1]{fontenc}
\usepackage{microtype}
\usepackage{graphicx}
\usepackage{booktabs}
\usepackage{multirow}
\usepackage{amsmath,amssymb,amsthm}
\usepackage{mathtools}
\usepackage{algorithm}
\usepackage{algorithmic}
\usepackage{xcolor}
\usepackage{hyperref}
\usepackage{enumitem}
\usepackage{natbib}

\hypersetup{
  colorlinks=true,
  linkcolor=blue,
  citecolor=blue,
  urlcolor=blue
}

\newcommand{\cZ}{\mathcal{Z}}
\newcommand{\cX}{\mathcal{X}}
\newcommand{\cY}{\mathcal{Y}}
\newcommand{\cM}{\mathcal{M}}
\newcommand{\cA}{\mathcal{A}}
\newcommand{\cW}{\mathcal{W}}
\newcommand{\R}{\mathbb{R}}
\newcommand{\E}{\mathbb{E}}
\newcommand{\Prb}{\mathbb{P}}
\newcommand{\norm}[1]{\left\lVert #1 \right\rVert}

\newcommand{\clip}{\mathrm{clip}}

\theoremstyle{plain}
\newtheorem{theorem}{Theorem}
\newtheorem{lemma}{Lemma}
\newtheorem{proposition}{Proposition}
\newtheorem{corollary}{Corollary}

\theoremstyle{definition}
\newtheorem{definition}{Definition}
\newtheorem{remark}{Remark}

\AtBeginDocument{%
  \setlength{\abovedisplayskip}{10pt plus 2pt minus 2pt}      
  \setlength{\belowdisplayskip}{10pt plus 2pt minus 2pt}      
  \setlength{\abovedisplayshortskip}{3pt plus 1pt minus 1pt} 
  \setlength{\belowdisplayshortskip}{4pt plus 2pt minus 1pt} 
}

\makeatletter
\renewcommand{\section}{\@startsection{section}{1}{\z@}{-0.10in}{0.01in}%
   {\normalfont\large\bf\raggedright}}
\renewcommand{\subsection}{\@startsection{subsection}{2}{\z@}{-0.08in}{0.005in}%
   {\normalfont\normalsize\bf\raggedright}}
\renewcommand{\paragraph}{\@startsection{paragraph}{4}{\z@}{0.8ex plus 0.2ex minus 0.2ex}{-1em}%
   {\normalfont\normalsize\bf}}
\makeatother

\icmltitlerunning{Risk-Equalized DP Synthetic Data}

\begin{document}

\twocolumn[
\icmltitle{Risk-Equalized Differentially Private Synthetic Data: Protecting Outliers by Controlling Record-Level Influence}

\begin{icmlauthorlist}
\icmlauthor{Amir Asiaee}{vumc-biostat}
\icmlauthor{Chao Yan}{vumc-bmi}
\icmlauthor{Zachary B.\ Abrams}{wustl}
\icmlauthor{Bradley A.\ Malin}{vumc-bmi}
\end{icmlauthorlist}

\icmlaffiliation{vumc-biostat}{Department of Biostatistics, Vanderbilt University Medical Center, 2525 West End Avenue, Nashville, TN 37203, USA}
\icmlaffiliation{vumc-bmi}{Department of Biomedical Informatics, Vanderbilt University Medical Center, Nashville, TN, USA}
\icmlaffiliation{wustl}{Institute for Informatics, Washington University, 4444 Forest Park Avenue, St.\ Louis, MO 63108, USA}

\icmlcorrespondingauthor{Amir Asiaee}{amir.asiaeetaheri@vumc.org}

\icmlkeywords{Differential Privacy, Synthetic Data, Outliers, Per-Instance Differential Privacy}

\vskip 0.3in
]
\printAffiliationsAndNotice{}

\begin{abstract}
When synthetic data is released, some individuals are harder to protect than others. A patient with a rare disease combination or a transaction with unusual characteristics stands out from the crowd. Differential privacy provides worst-case guarantees, but empirical attacks---particularly membership inference---succeed far more often against such outliers, especially under moderate privacy budgets and with auxiliary information.

This paper introduces \emph{risk-equalized DP synthesis}, a framework that prioritizes protection for high-risk records by reducing their influence on the learned generator. The mechanism operates in two stages: first, a small privacy budget estimates each record's ``outlierness''; second, a DP learning procedure weights each record inversely to its risk score. Under Gaussian mechanisms, a record's privacy loss is proportional to its influence on the output---so deliberately shrinking outliers' contributions yields tighter per-instance privacy bounds for precisely those records that need them most.

We prove end-to-end DP guarantees via composition and derive closed-form per-record bounds for the synthesis stage (the scoring stage adds a uniform per-record term). Experiments on simulated data with controlled outlier injection show that risk-weighting substantially reduces membership inference success against high-outlierness records; ablations confirm that targeting---not random downweighting---drives the improvement. On real-world benchmarks (Breast Cancer, Adult, German Credit), gains are dataset-dependent, highlighting the interplay between scorer quality and synthesis pipeline.
\end{abstract}

\section{Introduction}

Suppose a hospital releases a synthetic version of its patient database. Most patients---those with common diagnoses, typical demographics, and standard treatment paths---blend into the crowd. But consider a patient with a rare genetic condition who also happens to be unusually young and from a small geographic region. Even in synthetic data, this combination may appear only once. An adversary with auxiliary knowledge about such a patient can search the synthetic dataset, find a unique or near-unique match, and infer sensitive attributes that were never meant to be disclosed.

This scenario reflects real patterns in privacy research. Empirical studies show that privacy attacks against machine learning models and synthetic data succeed far more often against unusual records \citep{shokri2017membership,kulynych2022disparate}. For synthetic data specifically, outliers face substantially higher re-identification risk through linkage attacks \citep{trindade2024syntheticoutliers}, and membership inference attacks achieve higher accuracy against rare individuals even when differential privacy is applied \citep{vanbreugel2023domias,golob2025mamamia}---particularly under moderate privacy budgets.

Differential privacy offers a formal guarantee: changing any single record changes the output distribution by at most a bounded amount. But this is a \emph{worst-case} bound. The parameter $\varepsilon$ describes the maximum privacy loss any record could suffer---it says nothing about how that loss is distributed. In practice, typical records may experience privacy loss far below the nominal $\varepsilon$, while outliers approach or reach the worst case \citep{wang2019pidp,thudi2024gradients}. Standard DP mechanisms treat all records symmetrically, yet their empirical vulnerabilities differ.

\paragraph{The core idea.}
If outliers face higher empirical risk, a principled mechanism should compensate by giving them stronger protection. The question is how. Excluding outliers entirely would degrade utility for minority subgroups---often the populations most important to represent. Adding more noise globally would harm everyone.

Our solution is to control each record's \emph{influence} on the learned generator. Under Gaussian mechanisms, privacy loss is proportional to how much a record affects the output. By deliberately reducing the contribution of high-risk records---through careful weighting rather than exclusion---we can tighten their per-instance privacy bounds while still including their information. The result is a form of privacy equalization: records that are harder to protect receive less influence and hence lower privacy loss, while typical records contribute normally.

\paragraph{Approach.}
The mechanism operates in two stages. First, using a small privacy budget, we estimate each record's outlierness---how rare or atypical it is---via a differentially private scoring procedure (e.g., histogram frequencies or density estimates). Second, we train a DP synthesizer where each record's contribution is scaled inversely to its risk score. High-risk outliers contribute less to the learned model, which directly reduces their per-instance privacy loss. The two stages compose via standard DP accounting.

\paragraph{Contributions.}
\begin{enumerate}[leftmargin=*,itemsep=2pt]
\item A problem formulation that explicitly targets outlier protection: minimize utility loss subject to global DP and an upper bound on per-instance privacy loss for high-risk records.

\item REPS (Risk-Equalized Private Synthesis), a modular mechanism combining DP outlier scoring with risk-weighted DP learning, instantiated for Gaussian sufficient-statistics release.

\item Per-instance privacy theory: closed-form bounds showing how a record's privacy loss depends on its weight, plus a constructive schedule for choosing weights to cap per-record $\varepsilon_i$ for designated high-risk records.

\item Experiments on simulated data (with controlled outlier injection) and real-world tabular benchmarks (Breast Cancer Wisconsin, Adult, German Credit), evaluating both utility and privacy with explicit focus on membership inference stratified by outlierness.
\end{enumerate}

\paragraph{Scope.}
The goal is not to improve aggregate fidelity metrics---existing DP synthesizers already perform well on averages. Rather, we target the \emph{distribution} of privacy protection, ensuring that formal guarantees are realized more uniformly across the population. This matters especially in healthcare and social domains where outliers often represent vulnerable individuals who most need protection.

\section{Related Work}

This section situates our contribution relative to four research threads: empirical privacy vulnerability, per-instance and heterogeneous DP, DP synthetic data generation, and targeted high-risk approaches.

\subsection{Empirical privacy vulnerability}
Privacy attacks do not affect all individuals equally. Membership inference attacks \citep{shokri2017membership} reveal training set membership, but success rates vary dramatically across records. \citet{kulynych2022disparate} provide theoretical and empirical evidence of \emph{disparate vulnerability}: outliers and minorities face systematically higher attack success. \citet{jayaraman2021revisiting} caution that naive evaluations can mislead, but the core finding persists.

For synthetic data specifically, \citet{giomi2023anonymeter} develop Anonymeter, measuring singling-out, linkability, and inference risks. \citet{trindade2024syntheticoutliers} demonstrate that rare records are substantially easier to re-identify via linkage.

Two recent attacks sharpen the threat model. DOMIAS \citep{vanbreugel2023domias} targets \emph{local overfitting}: by comparing a target's density under the synthetic distribution versus an auxiliary reference, it achieves high membership inference accuracy---especially against uncommon samples. MAMA-MIA \citep{golob2025mamamia} exploits the structure of marginal-based generators (PrivBayes, MST, Private-GSD): knowing which algorithm was used allows substantially more efficient inference than prior attacks. These results motivate our experimental protocol: we evaluate membership inference stratified by outlierness and include attacks tailored to the generator class.

\citet{pollock2025freerecord} show that high-risk records can be identified efficiently from training artifacts (loss trajectories, gradient norms), suggesting practical outlier scoring within the DP pipeline.

\subsection{Per-instance and heterogeneous differential privacy}
Standard DP provides a worst-case guarantee: the same $\varepsilon$ applies to all records. Per-instance DP (pDP) \citep{wang2019pidp} refines this by deriving record-specific bounds $\varepsilon_i(D)$ that can be much tighter for typical records. \citet{thudi2024gradients} analyze DP-SGD's per-instance behavior, showing that records with similar neighbors enjoy tighter bounds than isolated outliers. These works are \emph{analytic}---they quantify heterogeneity post hoc rather than designing mechanisms to address it.

A separate line studies \emph{personalized} or \emph{individualized} privacy \citep{jorgensen2015pdp,boenisch2023idpsgd}, where users specify their desired protection level. Recent work on \emph{per-record DP} \citep{seeman2023perrecord,finley2024slowlyscaling} allows the privacy parameter to vary across records based on record characteristics (e.g., magnitude of contributions in skewed data). The key difference from our work: these frameworks let privacy vary to \emph{improve utility} on typical records while maintaining a budget; we use risk-based weighting to \emph{tighten protection} for outliers who face higher empirical attack success. The mechanisms are complementary: per-record DP provides the accounting framework, while our risk-adaptive weighting provides the policy for assigning tighter bounds to vulnerable records.

\subsection{DP synthetic data generation}
DP synthetic data methods fall into two families. \emph{Marginal-based approaches} like PrivBayes \citep{zhang2017privbayes} and PrivSyn \citep{zhang2021privsyn} measure noisy marginals and fit a distribution. \emph{Deep generative approaches} like PATE-GAN \citep{jordon2019pategan} and DP-CTGAN \citep{fang2022dpctgan} train neural generators with DP-SGD. Both families treat all records symmetrically: gradients are clipped uniformly, contributions are weighted equally, and the resulting privacy guarantee is worst-case.

Smooth sensitivity \citep{nissim2007smooth} calibrates noise to data-dependent rather than worst-case sensitivity, reducing noise when the data permits. Our approach is philosophically similar but inverted: rather than reducing noise globally, we \emph{increase protection locally} for high-risk records by shrinking their influence.

\subsection{Targeted high-risk approaches}
The closest prior work is $\varepsilon$-PrivateSMOTE \citep{carvalho2024privatesmote}, which addresses high-risk records by \emph{replacing} them with synthetic interpolations (SMOTE-style) under DP noise. This achieves strong linkability protection for targeted records with low computational cost. Our approach differs in three ways: (1) we reduce influence rather than replace---outliers still contribute to the learned model, preserving their distributional signal; (2) we provide formal per-instance privacy bounds, not just empirical risk reduction; (3) our framework applies to general DP learning (sufficient statistics, DP-SGD), not only interpolation-based synthesis.

\paragraph{Summary.}
Prior work documents outlier vulnerability empirically \citep{trindade2024syntheticoutliers,vanbreugel2023domias} or analyzes per-instance privacy theoretically \citep{wang2019pidp,thudi2024gradients}, but does not design mechanisms that explicitly control record-level influence based on risk. $\varepsilon$-PrivateSMOTE \citep{carvalho2024privatesmote} targets high-risk cases but replaces rather than weights them, and does not provide per-instance DP bounds. Our contribution is a DP synthesis mechanism with formal per-instance bounds and a constructive schedule for risk-based protection, bridging the empirical vulnerability literature with mechanism design.

\section{Preliminaries}

This section establishes notation and recalls the definitions we build on: differential privacy, per-instance privacy, and DP synthetic data generation.

\subsection{Data model and notation}
Let $D = (z_1,\dots,z_n) \in \cZ^n$ denote a dataset of $n$ records; we assume $n$ is public.
For tabular predictive modeling, each record is $z=(x,y)$ with features $x\in\cX$ and label $y\in\cY$.
To avoid ambiguity about normalization by $n$ under remove-one comparisons, we use the standard padding convention: let $\bot$ denote a fixed ``null'' record that contributes zero to all released statistics, and define $D^{(-i)} := (z_1,\dots,z_{i-1},\bot,z_{i+1},\dots,z_n)$. We treat $D$ and $D^{(-i)}$ as neighbors (equivalently, add/remove adjacency on variable-size datasets with a known size bound $n$ \citep{dworkroth2014book}).
A randomized mechanism $\cM$ maps datasets to outputs in some measurable space.

\subsection{Differential privacy}
\begin{definition}[Differential Privacy \citep{dwork2006calibrating}]
A mechanism $\cM$ satisfies $(\varepsilon,\delta)$-DP if for all neighboring $D \sim D'$ and all measurable $S$,
\[
\Prb[\cM(D)\in S] \le e^{\varepsilon}\Prb[\cM(D')\in S] + \delta.
\]
\end{definition}

\subsection{Per-instance privacy}
Standard DP is a worst-case guarantee: the same $\varepsilon$ bounds every record's privacy loss. Per-instance DP refines this by providing record-specific bounds.

\begin{definition}[Per-instance DP \citep{wang2019pidp}]
Fix dataset $D$ and index $i\in[n]$. Let $D^{(-i)}$ denote $D$ with record $z_i$ replaced by the null record $\bot$.
The mechanism $\cM$ satisfies $(\varepsilon_i(D),\delta)$-pDP for record $i$ on $D$ if for all events $S$,
\[
\Prb[\cM(D)\in S] \le e^{\varepsilon_i(D)}\Prb[\cM(D^{(-i)})\in S] + \delta.
\]
\end{definition}

If $\cM$ is $(\varepsilon,\delta)$-DP, then $\varepsilon_i(D)\le \varepsilon$ for all $D,i$. The per-instance bound can be much tighter for typical records and approach $\varepsilon$ only for outliers---this heterogeneity is what we exploit.

\subsection{DP synthetic data generation}
The standard paradigm learns generator parameters $\theta$ from $D$ using a DP algorithm $\cA$, then samples synthetic data:
\(
\theta \leftarrow \cA(D), \tilde D \sim p_\theta^{\otimes m}.
\)
Since any function of a DP output is DP (post-processing), releasing $\tilde D$ inherits the privacy guarantee of $\theta$.

\section{Problem Formulation}

This section formalizes what ``risk-equalized'' means. The primary goal is to \emph{cap per-instance privacy loss for high-risk records}: outliers should not approach the worst-case $\varepsilon$ just because they are rare. A secondary goal is to reduce inequality in per-instance bounds across all records.

\subsection{Outlier scores}
We abstract ``which records are high-risk'' via an outlierness functional.

\begin{definition}[Outlier score]
An \emph{outlier score} is a mapping $\mathsf{Out}: \cZ^n \times \cZ \to \R_{\ge 0}$
assigning each record $z_i$ a score $s_i := \mathsf{Out}(D, z_i)$.
Larger $s_i$ indicates higher re-identification or linkage risk.
\end{definition}

Concrete examples: negative log-density under a data model, $k$-nearest-neighbor distance, leverage scores, or small-cluster membership. In private settings, $s_i$ must be computed under DP or from public auxiliary data.

\subsection{Design objective}
Let $\cM$ denote a DP mechanism producing synthetic data $\tilde D$. We measure utility via downstream performance or fidelity metrics, and privacy via per-instance loss $\varepsilon_i(D)$.

\paragraph{Primary constraint: outlier protection.}
Let $\mathcal{H}_\tau := \{i: s_i \ge \tau\}$ be the high-risk set. The key constraint is:
\( \max_{i\in \mathcal{H}_\tau} \varepsilon_i(D) \;\le\; \varepsilon_{\mathrm{out}}, \)
where $\varepsilon_{\mathrm{out}} < \varepsilon$ is a tighter cap for outliers. This ensures that records most vulnerable to attack receive stronger formal protection.

\paragraph{Secondary constraint: inequality reduction.}
We may also bound the spread of per-instance losses:
\(
\mathsf{Ineq}(\varepsilon_{1:n}(D)) \le B,
\)
where $\mathsf{Ineq}$ could be the max-to-median ratio, Gini coefficient, or a quantile gap.

\paragraph{Optimization view.}
Combining these:
\begin{equation}
\label{eq:main-optim}
\begin{aligned}
\min_{\cM} \quad & \E[\mathsf{Util}(\cM(D))] \\
\text{s.t.}\quad & \cM \text{ is } (\varepsilon,\delta)\text{-DP},\\
& \max_{i\in \mathcal{H}_\tau} \varepsilon_i(D) \le \varepsilon_{\mathrm{out}}.
\end{aligned}
\end{equation}
Standard DP synthesis solves only the global constraint; our goal is to \emph{explicitly control} the distribution of $\varepsilon_i(D)$ across records via mechanism design.

\section{Method: Risk-Equalized DP Synthesis}

This section describes REPS (Algorithm~\ref{alg:redpsynth}), a modular mechanism achieving the objective in~\eqref{eq:main-optim}. The mechanism has two stages: (1) estimate each record's outlierness using a small privacy budget, then (2) train a DP generator where each record's contribution is weighted inversely to its risk. The stages compose additively.

\begin{algorithm}[t]
\caption{REPS}
\label{alg:redpsynth}
\begin{algorithmic}[1]
\REQUIRE Dataset $D$, budgets $(\varepsilon_s,\delta_s)$ and $(\varepsilon_t,\delta_t)$, weight map $g$, clip $C$.
\ENSURE Synthetic dataset $\tilde D$.
\STATE $\widehat{s}_{1:n} \leftarrow \cW(D)$ \hfill\textit{// DP risk scoring, budget $(\varepsilon_s,\delta_s)$}
\STATE $w_i \leftarrow g(\widehat{s}_i)$ for each $i$ \hfill\textit{// map scores to weights}
\STATE $\theta \leftarrow \cA(D; w_{1:n})$ \hfill\textit{// weighted DP learning, budget $(\varepsilon_t,\delta_t)$}
\STATE $\tilde D \sim p_\theta^{\otimes m}$
\STATE \textbf{return} $\tilde D$
\end{algorithmic}
\end{algorithm}

\subsection{Stage 1: DP outlier scoring}
Any DP mechanism that produces per-record risk scores suffices. We describe one concrete instantiation suitable for mixed tabular data: DP histogram frequencies.

Discretize continuous features into bins. For each column $j\in[d]$, release noisy bin counts:
\(
\tilde c_{j,b} = c_{j,b}(D) + \eta_{j,b}, \eta_{j,b}\sim \mathcal{N}(0,\sigma_s^2).
\)
Form estimated marginal probabilities $\tilde p_{j,b} = \max(\tilde c_{j,b},0)/\sum_{b'}\max(\tilde c_{j,b'},0)$ and define a rarity score:
\(
\widehat{s}_i = -\sum_{j=1}^d \log\big(\max(\tilde p_{j, \mathrm{bin}(x_{ij})}, p_{\min})\big).
\)
Records with low marginal probabilities receive higher scores. This is not a full density model but is stable, fast, and can be strengthened via pairwise marginals at additional privacy cost. Alternative scorers (DP clustering, DP $k$NN distances) are possible; our theory treats $\cW$ abstractly.

\paragraph{Sensitivity and accounting.}
Each record contributes to exactly one bin per feature, so the histogram vector for column $j$ has $\ell_2$ sensitivity 1 (add/remove changes one count by 1). For $d$ features released independently, we allocate $\varepsilon_s/d$ per feature under basic composition, or use tighter R\'enyi composition. In our experiments, we release all $d$ histogram vectors jointly as a single $\sum_j B_j$-dimensional vector (where $B_j$ is the number of bins for feature $j$); since each record affects exactly $d$ coordinates (one per feature), the $\ell_2$ sensitivity is $\sqrt{d}$, and we calibrate $\sigma_s = \sqrt{d} \cdot \sqrt{2\log(1.25/\delta_s)}/\varepsilon_s$.

\subsection{Stage 1.5: Mapping scores to weights}
Given scores $\widehat{s}_i$, define weights $w_i\in(0,1]$ that decrease with risk. Two useful families:
\[
\begin{aligned}
\textbf{(cap)}\quad w_i &= \min\!\left(1, \frac{\tau}{\widehat{s}_i+\tau}\right),\\
\textbf{(hinge-exp)}\quad w_i &= \exp\big(-\gamma(\widehat{s}_i - t)_+\big).
\end{aligned}
\]
The cap mapping yields gradual protection; hinge-exp yields sharper protection for extreme outliers.

For equalization, set $w_i$ so that $w_i \cdot \alpha_i \le \tau_{\mathrm{inf}}$ for all $i$, where $\alpha_i$ is record $i$'s influence proxy (e.g., clipped contribution norm). This directly caps per-instance privacy loss (Theorem~\ref{thm:pidp-gaussian}).

\subsection{Stage 2: Risk-weighted DP learning}

We instantiate two variants: one-shot Gaussian release (for marginal-based synthesis) and iterative DP-SGD (for deep generators). Both rely on \emph{clipping}---bounding each record's contribution to norm at most $C$---which is standard in DP learning to control sensitivity.

\paragraph{Instantiation A: weighted clipped statistics.}
Let $\phi:\cZ\to\R^k$ encode per-record sufficient statistics (e.g., feature values, squared values for variances, one-hot category indicators). Define the weighted clipped aggregate:
\begin{equation}
\label{eq:weighted-aggregate}
\begin{aligned}
f_w(D) &:= \frac{1}{n}\sum_{i=1}^n w_i \cdot \clip(\phi(z_i), C),\\
\clip(u,C) &:= u\cdot \min\{1, C/\norm{u}_2\}.
\end{aligned}
\end{equation}
Release $\tilde f = f_w(D) + \mathcal{N}(0,\sigma^2 I_k)$, fit a graphical model or exponential family to $\tilde f$, and sample synthetic data.

\paragraph{Instantiation B: risk-weighted DP-SGD.}
For a generator loss $\ell(\theta; z)$, standard DP-SGD clips per-example gradients and adds noise. We modify it to incorporate risk weights:
\[
\begin{aligned}
\bar g(\theta) &= \frac{1}{|B|}\sum_{i\in B} w_i\cdot \clip(\nabla_\theta \ell(\theta; z_i), C),\\
\tilde g(\theta) &= \bar g(\theta) + \mathcal{N}(0, \sigma^2 C^2 I).
\end{aligned}
\]
Update $\theta \leftarrow \theta - \eta \tilde g(\theta)$. This reduces outliers' influence on the learned generator while still including their signal.

\section{Theory}

This section provides the formal underpinnings for REPS. The results fall into two categories: (1) \emph{standard} DP machinery---composition and Gaussian mechanism bounds---applied to our setting, and (2) \emph{new} contributions---the constructive protection schedule (Proposition~\ref{prop:targeted}) and the interpretation of weighting as a per-instance privacy lever. Our per-instance bounds are tightest for Instantiation A (one-shot Gaussian release); for DP-SGD we provide intuition and validate empirically.
\textbf{All proofs are deferred to Appendix~\ref{app:proofs}.}

\subsection{End-to-end DP via composition}
\begin{theorem}[Composition]
\label{thm:composition}
If Stage 1 scorer $\cW$ is $(\varepsilon_s,\delta_s)$-DP and Stage 2 learner $\cA(\cdot;w)$ is $(\varepsilon_t,\delta_t)$-DP for any fixed $w$, then Algorithm~\ref{alg:redpsynth} is $(\varepsilon_s+\varepsilon_t,\delta_s+\delta_t)$-DP.
\end{theorem}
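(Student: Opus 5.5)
The plan is to view Algorithm~\ref{alg:redpsynth} as an adaptive (sequential) composition of two mechanisms, followed by post-processing. Write the full pipeline as $\cM(D) = \mathsf{Post}\big(\cW(D), \cA(D; g(\cW(D)))\big)$. Here $\mathsf{Post}$ discards the scores and samples $\tilde D \sim p_\theta^{\otimes m}$. Because post-processing preserves DP, it suffices to show that the joint release $(\widehat s_{1:n}, \theta)$ is $(\varepsilon_s+\varepsilon_t,\delta_s+\delta_t)$-DP. The weight map $g$ is a fixed, data-independent function, so $w = g(\widehat s)$ is itself post-processing of the Stage~1 output. Stage~2 is therefore a mechanism $\cA(\cdot; w)$ whose auxiliary input depends on $D$ only through a DP output. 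This is exactly the setting of adaptive composition.

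Concretely, fix neighbors $D \sim D'$ (in the padding sense of the preliminaries) and a measurable set $S$ in the joint output space. Conditioning on the first output gives
\[
\Prb[(\widehat s,\theta)\in S] = \int \Prb[\cA(D; g(s)) \in S_s]\, d\mu_D(s),
\]
where $S_s$ is the slice of $S$ at $s$ and $\mu_D$ is the law of $\cW(D)$. For each fixed $s$, the hypothesis that $\cA(\cdot;w)$ is $(\varepsilon_t,\delta_t)$-DP for every fixed $w$ yields $\Prb[\cA(D;g(s))\in S_s] \le \min\{1, e^{\varepsilon_t}\Prb[\cA(D';g(s))\in S_s]+\delta_t\}$. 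The remaining step integrates this bound against $\mu_D$ and then uses the $(\varepsilon_s,\delta_s)$ guarantee of $\cW$ to switch $\mu_D$ to $\mu_{D'}$.

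The main obstacle is this switching step. The integrand is a function bounded in $[0,1]$, not an indicator, and the two additive slacks must combine additively rather than multiplicatively. I would first apply the standard fact that $(\varepsilon,\delta)$-DP implies $\E_{\mu_D}[h] \le e^{\varepsilon}\E_{\mu_{D'}}[h] + \delta$ for every measurable $h:\to[0,1]$. This follows from the layer-cake representation $\E[h]=\int_0^1 \Prb[h>u]\,du$. Applying it to $h(s) = \min\{1, e^{\varepsilon_t}\Prb[\cA(D';g(s))\in S_s]\}$ will give a bound of the form $e^{\varepsilon_s+\varepsilon_t}\Prb_{D'}[S] + \delta_s + \delta_t$, using $\min\{1,a+\delta_t\}\le \min\{1,a\}+\delta_t$. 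One subtlety is that $h$ can be as large as $e^{\varepsilon_t}$ times a probability, so a careless bound would produce the term $e^{\varepsilon_t}\delta_s$. Truncating at $1$ before applying the Stage~1 guarantee is what avoids it; alternatively, one can simply invoke the standard adaptive composition theorem of \citet{dworkroth2014book}. Finally, I would note that measurability of $s\mapsto \Prb[\cA(D;g(s))\in S_s]$ is assumed, as is standard.
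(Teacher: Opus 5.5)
Your proposal is correct and follows the same route as the paper. Both treat the weights as post-processing of the Stage~1 scores, apply adaptive composition to the joint release $(\widehat{s}_{1:n},\theta)$, and then treat sampling $\tilde D$ as further post-processing. The only difference is that you prove the adaptive composition step yourself (conditioning on $s$, the layer-cake bound, and truncating at $1$ so that no $e^{\varepsilon_t}\delta_s$ term appears), whereas the paper simply cites the standard theorem.
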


\begin{corollary}[End-to-end per-instance bound]
\label{cor:end-to-end}
Under Theorem~\ref{thm:composition}, Algorithm~\ref{alg:redpsynth} satisfies $(\varepsilon_{i,\mathrm{tot}}(D),\delta_s+\delta_t)$-pDP for each record $i$, with
\[
\varepsilon_{i,\mathrm{tot}}(D) \le \varepsilon_s + \varepsilon_{i,t}(D),
\]
where $\varepsilon_{i,t}(D)$ is any valid per-instance bound for the Stage 2 release (e.g., Corollary~\ref{thm:pidp-gaussian} with $\delta_t$).
\end{corollary}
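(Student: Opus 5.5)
The plan is to rerun the adaptive-composition argument behind Theorem~\ref{thm:composition}, but only for the single neighboring pair $(D, D^{(-i)})$. The uniform worst-case bound for Stage~2 is replaced by the per-instance bound $\varepsilon_{i,t}(D)$.

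Write the output of Algorithm~\ref{alg:redpsynth} as $(\widehat{s}_{1:n},\theta)$. Since $\tilde D$ is a post-processing of this pair, it suffices to bound the pair; per-instance DP is preserved under post-processing because the inequality holds event by event. The weights $w = g(\widehat{s})$ are a deterministic function of the Stage~1 output. Hence, conditionally on $\widehat{s}$, Stage~2 is the fixed-weight mechanism $\cA(\cdot; w)$.

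\textbf{Stage 1.} Because $\cW$ is $(\varepsilon_s,\delta_s)$-DP and $D^{(-i)}$ is a neighbor of $D$ under the padding convention, we have for every event $S_1$
\[
\Prb[\cW(D)\in S_1]\le e^{\varepsilon_s}\Prb[\cW(D^{(-i)})\in S_1]+\delta_s .
\]
\textbf{Stage 2.} For each fixed $w$, we take
\[
\Prb[\cA(D;w)\in S_2]\le e^{\varepsilon_{i,t}(D)}\Prb[\cA(D^{(-i)};w)\in S_2]+\delta_t .
\]
This is the per-instance bound of Corollary~\ref{thm:pidp-gaussian} for the Stage~2 release at that $w$.

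\textbf{Combining.} For a joint event $S$, let $S_w$ be its section at $\widehat{s}$. Then
\[
\Prb[(\widehat{s},\theta)\in S]=\E_{\widehat{s}\sim\cW(D)}\bigl[\Prb[\cA(D;w)\in S_w]\bigr].
\]
Apply the Stage~2 inequality inside the expectation, capping each conditional probability at $1$:
\[
\min\bigl(1,\,e^{\varepsilon_{i,t}}q(\widehat{s})\bigr)+\delta_t,\qquad q(\widehat{s}):=\Prb[\cA(D^{(-i)};w)\in S_w].
\]
Next, change measure from $\cW(D)$ to $\cW(D^{(-i)})$ using the Stage~1 inequality, applied to the $[0,1]$-valued function $\min(1, e^{\varepsilon_{i,t}}q)$. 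The standard fact that an $(\varepsilon,\delta)$ event bound extends to $[0,1]$-valued test functions with the same $(\varepsilon,\delta)$ follows by the layer-cake representation. This step yields
\[
e^{\varepsilon_s}\,\E_{\widehat{s}\sim\cW(D^{(-i)})}\bigl[e^{\varepsilon_{i,t}}q(\widehat{s})\bigr]+\delta_s+\delta_t .
\]
The right-hand side equals $e^{\varepsilon_s+\varepsilon_{i,t}}\Prb[\cM(D^{(-i)})\in S]+\delta_s+\delta_t$, which is the claim.

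The main obstacle is that $\varepsilon_{i,t}(D)$ may depend on $w$, and $w$ is random through $\widehat{s}$. The factor $e^{\varepsilon_{i,t}}$ then cannot be pulled out of the expectation as a constant. I would handle this in one of two ways. The first is to interpret $\varepsilon_{i,t}(D)$ as a bound valid uniformly over the realized weights; under Corollary~\ref{thm:pidp-gaussian} it is monotone in $w_i$, so a bound at $w_i\le 1$ always works. The second, sharper option is to state the result conditionally on the released $\widehat{s}$. A further subtlety is that the weights on $D^{(-i)}$ are also computed from $\widehat{s}$ applied to the same index set. Since $\bot$ contributes zero, only the $i$-th summand differs between $f_w(D)$ and $f_w(D^{(-i)})$, so the per-instance sensitivity $w_i\|\clip(\phi(z_i),C)\|/n$ is indeed what enters Stage~2.
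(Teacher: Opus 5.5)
Your proposal is correct and follows the paper's argument. The paper's proof just invokes adaptive composition, using Stage~1's global $(\varepsilon_s,\delta_s)$ guarantee specialized to the pair $(D,D^{(-i)})$ and Stage~2's per-instance bound for each fixed $w$; you unpack that composition step explicitly via the $[0,1]$-valued test-function argument. Your first resolution of the $w$-dependence, a value of $\varepsilon_{i,t}(D)$ valid for every realizable $w$, is exactly the reading the paper's ``by assumption'' silently adopts. Your second option, conditioning on the released $\widehat{s}$, would only give a realization-dependent, ex-post guarantee, not pDP in the sense of the paper's definition.
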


\subsection{Influence and per-instance privacy: general framework}

The key insight is that under Gaussian mechanisms, a record's privacy loss is controlled by its \emph{influence}---how much removing it changes the output. This is a direct application of standard Gaussian mechanism analysis \citep{dworkroth2014book} to the per-instance setting.

\begin{definition}[Record-level influence]
For a function $f:\cZ^n \to \R^k$ and dataset $D$, define record $i$'s influence as
\[
\alpha_i(f,D) := \norm{f(D) - f(D^{(-i)})}_2,
\]
where $D^{(-i)}$ replaces $z_i$ with $\bot$.
\end{definition}

\begin{theorem}[Influence-to-privacy bound]
\label{thm:influence-privacy}
Let $\cM(D) = f(D) + \mathcal{N}(0,\sigma^2 I_k)$. For any $\delta \in (0,1)$, record $i$ on dataset $D$ satisfies $(\varepsilon_i(D),\delta)$-pDP with
\[
\varepsilon_i(D) \;\le\; \frac{\alpha_i(f,D)}{\sigma}\sqrt{2\log(1.25/\delta)}.
\]
\end{theorem}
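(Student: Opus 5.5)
The plan is to fix $D$ and $i$, write $\alpha:=\alpha_i(f,D)$ and $\varepsilon:=\frac{\alpha}{\sigma}\sqrt{2\log(1.25/\delta)}$, and run the classical Gaussian-mechanism argument of \citet{dworkroth2014book} on the single pair $(D,D^{(-i)})$. The global sensitivity $\Delta_2 f$ is replaced by the realized distance $\alpha=\norm{f(D)-f(D^{(-i)})}_2$. The per-instance definition only requires the one-sided inequality $\Prb[\cM(D)\in S]\le e^{\varepsilon}\Prb[\cM(D^{(-i)})\in S]+\delta$. The argument also only ever uses the two means $\mu_0=f(D)$ and $\mu_1=f(D^{(-i)})$, so nothing about other neighbors matters. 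If $\alpha=0$ the two output distributions coincide and the claim is trivial, so assume $\alpha>0$.

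\textbf{Step 1 (reduce to one dimension).} Let $P=\mathcal{N}(\mu_0,\sigma^2 I_k)$ and $Q=\mathcal{N}(\mu_1,\sigma^2 I_k)$, and let $u=(\mu_0-\mu_1)/\alpha$. By rotation invariance of the isotropic Gaussian, the log-likelihood ratio at $y=\mu_0+\xi$ with $\xi\sim\mathcal{N}(0,\sigma^2 I_k)$ is
\[
L(y)=\log\frac{dP}{dQ}(y)=\frac{\alpha}{\sigma^2}\langle \xi,u\rangle+\frac{\alpha^2}{2\sigma^2}.
\]
So under $P$ the privacy loss is distributed as $\frac{\alpha}{\sigma}Z+\frac{\alpha^2}{2\sigma^2}$ with $Z\sim\mathcal{N}(0,1)$. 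This depends on $f$ and $D$ only through $\alpha$, which is exactly why influence controls per-instance privacy.

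\textbf{Step 2 (tail bound).} Let $B=\{y: L(y)>\varepsilon\}$. It suffices to show $P(B)\le\delta$, because then for any measurable $S$
\[
P(S)\le P(S\setminus B)+P(B)\le e^{\varepsilon}Q(S\setminus B)+\delta\le e^{\varepsilon}Q(S)+\delta,
\]
using $dP\le e^{\varepsilon}dQ$ off $B$. The event $L>\varepsilon$ is $Z>\frac{\sigma\varepsilon}{\alpha}-\frac{\alpha}{2\sigma}$. Write $c=\sqrt{2\log(1.25/\delta)}$, so that $\frac{\sigma\varepsilon}{\alpha}=c$. I will then bound $\Prb[Z>t]\le \frac{1}{t\sqrt{2\pi}}e^{-t^2/2}$ for $t=c-\frac{\alpha}{2\sigma}$, as in Dwork--Roth Theorem A.1.

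\textbf{Main obstacle.} The Dwork--Roth calculation needs $\varepsilon\le 1$, equivalently $\alpha/\sigma$ small relative to $c$. That condition makes $\frac{\alpha}{2\sigma}=\frac{\varepsilon}{2c}$ a small correction, so that $t^2/2\ge \log(1.25/\delta)-O(\varepsilon)$, and the prefactor $\frac{1}{t\sqrt{2\pi}}$ absorbs the slack encoded in the constant $1.25$. Without that condition the bound with constant $1.25$ can fail: for large $\alpha/\sigma$ the threshold $t$ becomes small or negative, and $P(B)$ is then not small. So the hardest step is really a scope issue. I would state and prove the result under the implicit standard hypothesis that the resulting $\varepsilon_i(D)\le 1$, which is the regime in which the paper's $\sigma_s$ calibration is used, and carry out the Dwork--Roth inequality chain verbatim with $\Delta_2 f$ replaced by $\alpha$. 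For general $\varepsilon$, I would note that the exact analytic Gaussian bound of Balle--Wang still holds in the same way: it is monotone in $\alpha/\sigma$ and depends only on the realized influence $\alpha$. The qualitative claim that $\varepsilon_i(D)$ scales with $\alpha_i(f,D)/\sigma$ therefore survives with a different constant.
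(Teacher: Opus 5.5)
Your argument is the paper's: its proof simply invokes the Dwork--Roth Gaussian-mechanism bound for the single pair $(D,D^{(-i)})$, with $\Delta_2$ replaced by the realized distance $\alpha_i(f,D)$. Your privacy-loss computation and tail bound just unpack that citation, using only the one-sided tail, which is all pDP needs.

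Your scope caveat is correct and exposes a real omission in the paper, whose proof silently inherits the hypothesis $\varepsilon<1$ of Dwork--Roth's theorem. With $\mu=\alpha_i(f,D)/\sigma$ and $c=\sqrt{2\log(1.25/\delta)}$, the exact smallest admissible $\delta$ at $\varepsilon=c\mu$ is $\Phi(\mu/2-c)-e^{c\mu}\Phi(-\mu/2-c)$, which tends to $1$ as $\mu\to\infty$. So the statement as written needs the extra hypothesis $\varepsilon_i(D)\le 1$, or an analytic-Gaussian calibration in place of the constant $1.25$.
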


\begin{remark}[What is new]
The bound itself is standard. Our contribution is the \emph{design interpretation}: by constructing mechanisms where $\alpha_i$ is small for high-risk records, we obtain tighter per-instance bounds for precisely those records. This motivates the risk-weighted aggregates below.
\end{remark}

\subsection{Weighted aggregates: instantiation}

For the weighted clipped aggregate $f_w(D) = \frac{1}{n}\sum_{i=1}^n w_i \cdot \clip(\phi(z_i), C)$, we take $D^{(-i)}$ to replace $z_i$ with the null record $\bot$ and assume $\phi(\bot)=0$, so $\clip(\phi(\bot),C)=0$. This yields a clean influence bound under the same normalization by $n$.

\begin{lemma}[Influence under weighting]
\label{lem:record-sens}
\[
\alpha_i(f_w, D) = \norm{f_w(D) - f_w(D^{(-i)})}_2 \le \frac{w_i C}{n}.
\]
\end{lemma}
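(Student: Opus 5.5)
The plan is to compute the difference $f_w(D) - f_w(D^{(-i)})$ directly from the definition in \eqref{eq:weighted-aggregate} and observe that all terms except the $i$-th cancel. Because $n$ is public and we use the padding convention, $D^{(-i)}$ still has $n$ entries. Both aggregates are therefore normalized by the same $1/n$.

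Each record $j \neq i$ contributes $w_j\,\clip(\phi(z_j),C)/n$ to both sums. For this to cancel we need the weights $w_j$ to be treated as fixed when comparing $D$ and $D^{(-i)}$. In Algorithm~\ref{alg:redpsynth} the weights are produced by Stage 1 and then held fixed for the Stage 2 release, matching the hypothesis ``for any fixed $w$'' in Theorem~\ref{thm:composition}. I would make this explicit: the lemma concerns $f_w$ with $w_{1:n}$ held fixed, and any data dependence of $w$ is charged to the Stage 1 budget via Corollary~\ref{cor:end-to-end}.

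With that convention, the $i$-th slot of $D^{(-i)}$ holds $\bot$. Since $\phi(\bot)=0$, we have $\clip(\phi(\bot),C)=0$, so
\[
f_w(D) - f_w(D^{(-i)}) = \frac{w_i}{n}\,\clip(\phi(z_i),C).
\]

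It remains to bound the norm of a clipped vector. From the definition, $\norm{\clip(u,C)}_2 = \norm{u}_2\min\{1, C/\norm{u}_2\} = \min\{\norm{u}_2, C\} \le C$. The case $u=0$ is trivial under the natural convention $\clip(0,C)=0$. Combining this with $w_i \in (0,1]$, so that $|w_i| = w_i$, gives $\alpha_i(f_w,D) \le w_i C/n$.

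There is no real obstacle here. The only point requiring care is the first step: stating that the weights are fixed and that the padding convention keeps the normalizer at $n$. Without both, the other records' terms would not cancel exactly.
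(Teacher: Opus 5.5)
Your proof is correct and follows the paper's argument: with the padding convention and $\clip(\phi(\bot),C)=0$, every term except the $i$-th cancels, leaving $\frac{w_i}{n}\clip(\phi(z_i),C)$, whose norm is at most $w_iC/n$ because clipping bounds the norm by $C$. Your explicit requirement that $w_{1:n}$ be held fixed when comparing $D$ and $D^{(-i)}$ is an assumption the paper leaves implicit, so stating it is a worthwhile clarification.
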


Combining with Theorem~\ref{thm:influence-privacy}:

\begin{corollary}[Per-instance bound for weighted release]
\label{thm:pidp-gaussian}
For $\cM(D) = f_w(D) + \mathcal{N}(0,\sigma^2 I_k)$ and any $\delta_t\in(0,1)$, record $i$ satisfies $(\varepsilon_i(D),\delta_t)$-pDP with
\[
\varepsilon_i(D) \le \frac{w_i C}{n\sigma}\sqrt{2\log(1.25/\delta_t)}.
\]
\end{corollary}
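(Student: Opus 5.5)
The corollary follows by feeding Lemma~\ref{lem:record-sens} into Theorem~\ref{thm:influence-privacy}, with $f=f_w$ and $\delta=\delta_t$. We plan to check three things, in order.

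First, we confirm that the mechanism $\cM(D)=f_w(D)+\mathcal{N}(0,\sigma^2 I_k)$ has exactly the form required by Theorem~\ref{thm:influence-privacy}. Here $f_w:\cZ^n\to\R^k$ is a deterministic function of $D$, once the weights $w_{1:n}$ are fixed, and the noise is isotropic Gaussian. The only subtlety is the status of the weights. In Algorithm~\ref{alg:redpsynth} they come from Stage 1, so for this Stage-2 statement we treat $w_{1:n}$ as fixed, that is, conditioned on the Stage-1 output. We also keep $w_i$ attached to index $i$ in both $D$ and $D^{(-i)}$. This matches how Corollary~\ref{cor:end-to-end} uses the bound, with the scoring stage contributing a separate $\varepsilon_s$ term.

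Second, Theorem~\ref{thm:influence-privacy} gives $(\varepsilon_i(D),\delta_t)$-pDP with $\varepsilon_i(D)\le \frac{\alpha_i(f_w,D)}{\sigma}\sqrt{2\log(1.25/\delta_t)}$.

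Third, we substitute Lemma~\ref{lem:record-sens}. Since $\phi(\bot)=0$, the two aggregates differ only in the $i$-th summand, so $\alpha_i(f_w,D)=\frac{w_i}{n}\norm{\clip(\phi(z_i),C)}_2\le w_iC/n$. The right-hand side of the bound is increasing in $\alpha_i$, and $\delta_t$ is unchanged, so the displayed bound follows directly.

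The main obstacle is the range of validity of the Gaussian bound in Theorem~\ref{thm:influence-privacy}. The classical $\sqrt{2\log(1.25/\delta)}$ calibration is proved for $\varepsilon<1$. If the earlier theorem is read as holding for all $\delta\in(0,1)$ as stated, nothing more is needed. If it is not, the substitution is still sound in the regime where the resulting $\varepsilon_i(D)<1$. This is exactly the regime of interest, since downweighting with $w_i\le 1$ only shrinks $\varepsilon_i$. Outside that regime one would switch to the analytic Gaussian mechanism bound, which is monotone in $\alpha_i/\sigma$. We would add a remark to that effect rather than reprove the Gaussian mechanism.
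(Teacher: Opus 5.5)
Your proposal is correct and matches the paper's proof. The paper also applies Theorem~\ref{thm:influence-privacy} with $f=f_w$ and $\delta=\delta_t$, then bounds $\alpha_i(f_w,D)\le w_iC/n$ via Lemma~\ref{lem:record-sens}. Your side remark is also valid: the classical $\sqrt{2\log(1.25/\delta)}$ calibration is guaranteed only when the resulting $\varepsilon<1$, a condition the paper's statement of Theorem~\ref{thm:influence-privacy} omits, but this does not change the derivation of the corollary from that theorem as stated.
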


Reducing $w_i$ directly reduces record $i$'s per-instance privacy bound.

\subsection{Outlier protection schedule}

The following result shows how to protect a designated high-risk set while maintaining global DP calibration.

\begin{proposition}[Targeted protection]
\label{prop:targeted}
Let $\mathcal{H} \subset [n]$ be the high-risk set. Suppose we set
\[
w_i = \begin{cases}
\tau_{\mathrm{out}}/C & i \in \mathcal{H} \\
1 & i \notin \mathcal{H}
\end{cases}
\]
for some $\tau_{\mathrm{out}} \le C$. Then:
\begin{enumerate}[leftmargin=*,itemsep=1pt]
\item Records in $\mathcal{H}$ satisfy {\small}$\varepsilon_i(D) \le \varepsilon_{\mathrm{out}} := \frac{\tau_{\mathrm{out}}}{n\sigma}\sqrt{2\log(1.25/\delta_t)}$.
\item The global sensitivity is $\Delta_{\max} = C/n$ (unchanged from uniform weighting).
\item The mechanism remains $(\varepsilon,\delta_t)$-DP with $\varepsilon = \frac{C}{n\sigma}\sqrt{2\log(1.25/\delta_t)}$.
\end{enumerate}
\end{proposition}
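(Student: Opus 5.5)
The three claims follow almost directly from Lemma~\ref{lem:record-sens}, Corollary~\ref{thm:pidp-gaussian}, and the standard Gaussian mechanism, so the plan is to verify each in turn with the given weights plugged in.

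\textbf{Claim 1.} For $i\in\mathcal{H}$ we have $w_i=\tau_{\mathrm{out}}/C$. Substituting this into Corollary~\ref{thm:pidp-gaussian} gives
\[
\varepsilon_i(D)\le \frac{(\tau_{\mathrm{out}}/C)\,C}{n\sigma}\sqrt{2\log(1.25/\delta_t)}=\varepsilon_{\mathrm{out}},
\]
which is exactly the claimed bound. Underneath, Lemma~\ref{lem:record-sens} gives influence $\alpha_i\le \tau_{\mathrm{out}}/n$, and Theorem~\ref{thm:influence-privacy} converts that influence into the privacy bound.

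\textbf{Claim 2.} This is where some care is needed, because the weights themselves are data dependent. I will treat $w$ as \emph{fixed}, i.e.\ chosen by Stage~1 and conditioned on, as in the hypothesis ``for any fixed $w$'' of Theorem~\ref{thm:composition}.

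With $w$ fixed, take neighbors $D$ and $D^{(-j)}$ for an arbitrary $j$. Then
\[
f_w(D)-f_w(D^{(-j)})=\frac{w_j}{n}\clip(\phi(z_j),C),
\]
whose norm is at most $w_j C/n$. Since $\tau_{\mathrm{out}}\le C$, every weight satisfies $w_j\le 1$. Hence the worst-case sensitivity is $\max_j w_jC/n\le C/n$. Equality holds whenever some $j\notin\mathcal{H}$ has $\norm{\phi(z_j)}\ge C$, so the sensitivity is $C/n$, the same as under uniform weights.

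I need to flag one subtlety. Under add/remove adjacency with padding, the slot being changed could receive either weight. The uniform bound $w_j\le 1$ covers both cases, so no assumption about which slot is removed is needed.

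\textbf{Claim 3.} Apply the classical Gaussian mechanism: with $\ell_2$ sensitivity $\Delta=C/n$ and noise $\mathcal{N}(0,\sigma^2 I_k)$, the release is $(\varepsilon,\delta_t)$-DP for $\varepsilon=\frac{\Delta}{\sigma}\sqrt{2\log(1.25/\delta_t)}$. Equivalently, take the supremum over $D$ and $i$ of the bound in Theorem~\ref{thm:influence-privacy}.

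The Dwork--Roth calibration formally requires $\varepsilon<1$. I would note this as an implicit condition inherited from Theorem~\ref{thm:influence-privacy}, rather than re-deriving the Gaussian tail bound.

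The only real obstacle is justifying that data-dependent weights do not break the sensitivity argument. Conditioning on the Stage~1 output handles this, and the composition in Theorem~\ref{thm:composition} accounts for the budget spent on the weights. Everything else is substitution.
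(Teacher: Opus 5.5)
Your proposal is correct and matches the paper's proof: substitute $w_i=\tau_{\mathrm{out}}/C$ into Corollary~\ref{thm:pidp-gaussian} for item 1, bound the worst-case influence by $\max_i w_iC/n = C/n$ for item 2, and apply the Gaussian mechanism at that sensitivity for item 3. Your extra remarks are caveats the paper leaves implicit rather than departures from its argument: conditioning on the Stage~1 weights, the need for a nonempty complement of $\mathcal{H}$ to get equality, and the $\varepsilon<1$ condition of the classical Gaussian calibration.
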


The proposition shows that protecting outliers does \emph{not} require increasing global noise---it only requires identifying them and reducing their weights. The trade-off is in utility: outliers contribute less to the learned model.

\begin{remark}[Comparison to naive equalization]
A naive approach would cap \emph{all} weights at $\tau_{\mathrm{out}}/C$, yielding $\varepsilon_i \le \varepsilon_{\mathrm{out}}$ for all $i$. But this reduces sensitivity globally and either (a) allows less noise for the same global $\varepsilon$, or (b) requires the same noise for a tighter global $\varepsilon$. Proposition~\ref{prop:targeted} achieves outlier protection without this trade-off by targeting only high-risk records.
\end{remark}

\subsection{Utility: bias--variance trade-off}

Risk-weighting introduces bias from downweighting. Let $\bar\phi(D) = \frac{1}{n}\sum_i \phi(z_i)$ be the unweighted mean.

\begin{proposition}[Bias bound]
\label{prop:bias} 
\begin{align} \nonumber
    &\displaystyle\norm{f_w(D) - \bar\phi(D)}_2 \\ \nonumber
    \le 
    &\frac{1}{n}\sum_{i=1}^n (1-w_i)\norm{\clip(\phi(z_i),C)}_2 + \\ \nonumber
    &\frac{1}{n}\sum_{i=1}^n \norm{\phi(z_i) - \clip(\phi(z_i),C)}_2.
\end{align}
\end{proposition}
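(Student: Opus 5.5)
The plan is a single add-and-subtract decomposition followed by the triangle inequality. Write $c_i := \clip(\phi(z_i),C)$ for brevity. Then
\[
f_w(D) - \bar\phi(D) = \frac{1}{n}\sum_{i=1}^n \bigl(w_i c_i - \phi(z_i)\bigr)
= \frac{1}{n}\sum_{i=1}^n (w_i - 1)\,c_i \;+\; \frac{1}{n}\sum_{i=1}^n \bigl(c_i - \phi(z_i)\bigr).
\]
The first sum isolates the bias due to weighting, acting only on the clipped vectors. The second isolates the bias due to clipping, and it does not depend on the weights. This splitting, subtracting and adding $\frac1n\sum_i c_i$, is the only real choice in the proof. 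Ordering the terms the other way, with $w_i(c_i - \phi(z_i))$ and $(w_i-1)\phi(z_i)$, would produce a different bound: its first term would involve $\norm{\phi(z_i)}_2$ rather than the clipped norm appearing in the statement.

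Next, apply the triangle inequality to the outer sum of the two pieces, and then again inside each sum. This gives
\[
\norm{f_w(D) - \bar\phi(D)}_2 \le \frac{1}{n}\sum_{i=1}^n |w_i - 1|\,\norm{c_i}_2 + \frac{1}{n}\sum_{i=1}^n \norm{\phi(z_i) - c_i}_2 .
\]
The weights produced by the cap and hinge-exp maps, and by Proposition~\ref{prop:targeted} when $\tau_{\mathrm{out}}\le C$, all lie in $(0,1]$. Hence $|w_i-1| = 1-w_i$, which yields the stated bound exactly. I would state explicitly that the bound uses the standing assumption $w_i\in(0,1]$. Without it, the first term should carry $|1-w_i|$.

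No step is a genuine obstacle. The only point requiring care is choosing the correct intermediate term so that the weighting error is paired with the clipped norm $\norm{\clip(\phi(z_i),C)}_2$. As a remark, I could note that $\norm{c_i}_2\le C$ gives the cruder bound $\frac{C}{n}\sum_i(1-w_i)$ on the first term. I could also note that the second term vanishes whenever $\norm{\phi(z_i)}_2\le C$ for all $i$.
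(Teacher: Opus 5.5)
Your proposal is correct and follows the paper's proof essentially step for step. Both add and subtract $\frac{1}{n}\sum_i \mathrm{clip}(\phi(z_i),C)$, apply the triangle inequality, and use the standing assumption $w_i\in(0,1]$ to replace $|w_i-1|$ by $1-w_i$.
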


The first term is the cost of downweighting; the second is the cost of clipping. When only a small fraction of records (outliers) have $w_i < 1$, the bias is small. The variance is $k\sigma^2$, determined by global DP calibration.

\subsection{Extension to DP-SGD (informal)}

For DP-SGD, the influence $\alpha_i$ depends on gradient similarity across iterations \citep{thudi2024gradients}. Risk-weighting shrinks the clipped gradient contribution of high-risk records at each step, directly reducing their cumulative influence. A full per-instance accounting extends \citet{feldman2021renyifilter}; we leave formal statements to future work and validate empirically via attack success stratified by outlierness.

\section{Experiments}

We evaluate REPS along four axes: (Q1) whether, under a fixed global DP budget $(\varepsilon,\delta)$, it reduces outlier-targeted membership inference compared to standard DP synthesis; (Q2) whether it reduces risk inequality while preserving utility; (Q3) how the utility--privacy frontier varies with risk-weight aggressiveness $\gamma$ and budget allocation; and (Q4) whether gains are specific to targeting outliers (vs.\ random downweighting).

\subsection{Threat Model}
\label{sec:threat-model}

We consider an adversary who observes the released synthetic dataset $\tilde{D}$ and aims to infer information about individuals in the original training set $D$.

\paragraph{Adversary knowledge.}
The adversary has access to: (i)~the full synthetic dataset $\tilde{D}$; (ii)~knowledge of the synthesis algorithm class (but not random seeds or noise realizations); and (iii)~auxiliary information about some target individuals, such as partial quasi-identifiers or known membership in the training population.

\paragraph{Adversary goals.}
We focus on membership inference (MIA), aligned with inference risks in the Anonymeter framework \citep{giomi2023anonymeter}. MIA asks whether a target record $z^*$ is in $D$; we measure success via AUC and \emph{advantage} $= 2\cdot \text{AUC}-1$ \citep{yeom2018privacy}. Throughout, we stratify results by \emph{outlier decile} (defined by $k$NN distance in feature space) to quantify disproportionate risk for high-decile records.

\paragraph{Attack implementations.}
We implement two MIA variants: (1)~a simple distance-based baseline where records closer to synthetic data are predicted as members, and (2)~a DOMIAS-style attack \citep{vanbreugel2023domias} that computes the ratio of distances to synthetic data vs.\ a reference set of non-members.

\paragraph{Success metrics.}
We report: (i)~overall MIA advantage (absolute value); (ii)~top-decile MIA advantage (attack success on the 10\% most outlying records); and (iii)~risk inequality ratio = top-decile / median-decile advantage.

\subsection{Datasets}
\label{sec:datasets}

We evaluate on four datasets spanning different scales and outlier characteristics:

\paragraph{Simulated dataset (controlled outliers).}
We construct a synthetic tabular dataset ($n=6{,}000$) with explicit outlier injection to enable ground-truth evaluation. The dataset contains 6 continuous features drawn from a 2-component Gaussian mixture (inliers) and 3 categorical features. We inject 2\% outliers ($n_{\text{out}}=120$) with: (i)~rare categorical values (``Z'', ``Q'', ``R'') that appear only in outliers; and (ii)~shifted continuous features ($\mu = 2.4$ vs.\ $\mu \in \{-0.8, 0.8\}$ for inliers). A binary label is generated via logistic model with signal from both continuous and categorical features.

\paragraph{Breast Cancer Wisconsin.}
UCI Breast Cancer dataset ($n=569$) with 30 continuous features from digitized cell nucleus images. We bin 3 randomly selected features into 5 quantile-based categories to create a mixed-type dataset. Natural outliers arise from rare tumor characteristics.

\paragraph{Adult (Census Income).}
UCI Adult dataset ($n=45{,}222$) with 6 continuous and 8 categorical features predicting income $>$\$50K. This large-scale dataset tests scalability and contains natural demographic outliers (rare occupation/education combinations).

\paragraph{German Credit.}
UCI German Credit dataset ($n=1{,}000$) with 7 continuous and 13 categorical features predicting creditworthiness. The high categorical dimensionality creates many rare feature combinations that serve as natural outliers.

\paragraph{Data splits and evaluation.}
Each dataset is split 56\%/14\%/30\% into train/validation/test sets. The validation set is used for hyperparameter selection ($\gamma$) to avoid test leakage. Outlier scores are computed via $k$NN distance ($k=10$) in standardized feature space; records are assigned to deciles with decile 10 containing the 10\% most outlying. All experiments use 5 random seeds; we report mean results.

\subsection{Methods and Baselines}
\label{sec:methods}

We compare REPS against baselines and ablations:

\paragraph{Non-private baseline.}
We run the same Naive Bayes synthesis without any noise to establish utility upper bounds. This baseline also reveals the privacy leakage when no DP protection is applied.

\paragraph{DP uniform (no scoring).}
All records receive weight $w_i = 1$ and we spend the full budget $(\varepsilon,\delta)$ on the synthesis stage. This is the standard DP baseline under the same total privacy budget.

\paragraph{DP uniform (matched split).}
To isolate the effect of risk-weighting under matched noise, we also report a split-budget uniform baseline that uses only $(\varepsilon_t,\delta_t)=(0.9\varepsilon,\delta/2)$ for synthesis. The remaining $(\varepsilon_s,\delta_s)=(0.1\varepsilon,\delta/2)$ budget is unused.

\paragraph{REPS (ours).}
Two-stage mechanism as described in Algorithm~\ref{alg:redpsynth}: Stage 1 computes DP histogram-based scores (16 bins for continuous features; exact categories for categorical features) using $(\varepsilon_s,\delta_s)=(0.1\varepsilon,\delta/2)$; Stage 1.5 maps scores to weights via $w_i = \exp(-\gamma(s_i-t)_+)$ with threshold $t$ at the 90th percentile; Stage 2 releases risk-weighted clipped statistics (Eq.~\ref{eq:weighted-aggregate}) using $(\varepsilon_t,\delta_t)=(0.9\varepsilon,\delta/2)$. We select $\gamma \in \{0.5, 1.0, 2.0, 4.0\}$ using a held-out validation set: choose the largest $\gamma$ whose validation TSTR AUROC is within 0.02 of the DP uniform (no scoring) baseline. This tuning step is \emph{not} included in the DP accounting; in a deployed mechanism, $\gamma$ must be fixed a priori or selected using a DP-safe procedure.

\paragraph{Ablation: random downweighting.}
Apply the same weight distribution as REPS but shuffle weights randomly across records. This tests whether the benefit comes from reducing \emph{total} contribution variance or specifically from targeting outliers.

\paragraph{Ablation: hard removal.}
Remove all records with DP scores above the 90th percentile threshold, then run standard DP synthesis on remaining records. This compares soft downweighting against hard exclusion.

\paragraph{Additional baselines.}
Our goal is to isolate the effect of risk-weighting in a controlled sufficient-statistics pipeline. We therefore emphasize comparisons against the two uniform DP baselines above and targeted ablations. Extending REPS weighting to canonical DP synthesizers (e.g., PGM/MST or DP-SGD-based generators) is an important direction for future work.

\subsection{Privacy Budget and Accounting}
\label{sec:privacy-budget}

We evaluate across privacy budgets $\varepsilon \in \{0.5, 1.0, 2.0, 4.0\}$ with $\delta = 1/n^2$ (standard choice for $n$-record datasets). For REPS (and the matched-split baseline), we split privacy as $\varepsilon = \varepsilon_s + \varepsilon_t$ with $\varepsilon_s = 0.1\varepsilon$, $\varepsilon_t = 0.9\varepsilon$, and $\delta_s = \delta_t = \delta/2$. The DP uniform (no scoring) baseline uses the full $(\varepsilon,\delta)$ for synthesis.
End-to-end DP follows from Theorem~\ref{thm:composition}. For a Gaussian mechanism with $\ell_2$ sensitivity $\Delta_2$, we use noise $\sigma = \Delta_2 \cdot \sqrt{2\log(1.25/\delta)}/\varepsilon$ \citep{dworkroth2014book}.

\subsection{Utility Evaluation}
\label{sec:utility-eval}

We measure utility via \emph{train-on-synthetic, test-on-real} (TSTR): we generate $\tilde{D}$ with $|\tilde{D}| = |D_{\text{train}}|$, train logistic regression on $\tilde{D}$, and report AUROC on the held-out real test set. Full details are in Appendix~\ref{app:experiments}.

\subsection{Results: Membership Inference by Outlier Decile}
\label{sec:results-mia}

Full numerical results across all datasets and baselines are provided in Appendix~\ref{app:experiments}. We report two uniform DP baselines: DP uniform (no scoring), which spends the full $(\varepsilon,\delta)$ on synthesis, and DP uniform (matched split), which matches REPS's synthesis-stage budget.

\paragraph{Q1: Does REPS reduce outlier-targeted privacy risk?}

On the simulated dataset with ground-truth injected outliers, REPS reduces top-decile MIA advantage across all tested $\varepsilon$ values (Figure~\ref{fig:mia-eps}). For example, at $\varepsilon=4.0$ the top-decile advantage drops from 0.760 (DP uniform, no scoring) to 0.728 (REPS). On the real-world benchmarks, results are mixed: Breast Cancer shows modest improvements at higher $\varepsilon$, while Adult and Credit do not show consistent top-decile improvements under the same total budget (Appendix~\ref{app:experiments}; per-decile curves in Figure~\ref{fig:decile-curves}). This underscores that risk-weighting is not a universal ``free win'' against membership inference in all tabular settings.

\paragraph{Q2: Does it reduce risk inequality while preserving utility?}

Figure~\ref{fig:utility-eps} shows the utility--privacy trade-off on the simulated dataset. Risk-weighting can reduce top-decile vulnerability with modest utility loss, and can modestly reduce the top/median inequality ratio in settings where outliers are well captured by the scorer. However, on datasets where the vulnerability is driven by higher-order feature interactions, we observe limited or negative changes in the inequality ratio (Appendix~\ref{app:experiments}).

\begin{figure}[t]
\centering
\includegraphics[width=0.92\linewidth]{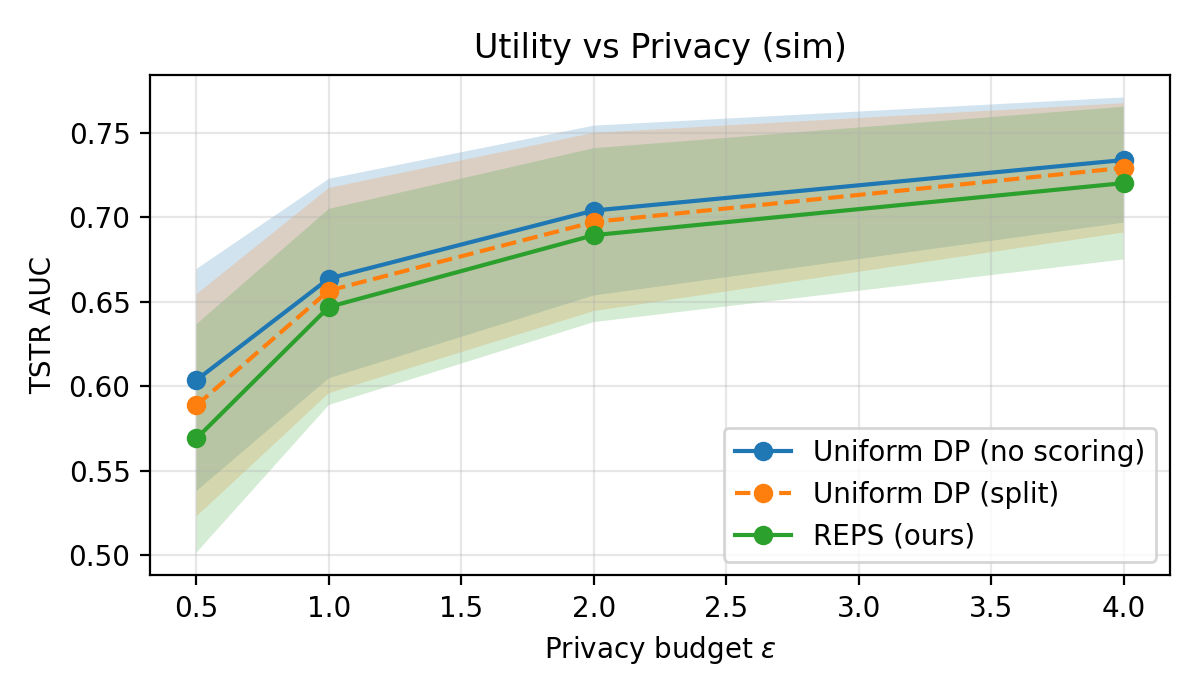}
\caption{Utility (TSTR AUROC) versus privacy budget on the simulated dataset. REPS shows slightly lower utility---the expected cost of downweighting outliers to protect their privacy.}
\label{fig:utility-eps}
\end{figure}

\begin{figure}[t]
\centering
\includegraphics[width=0.92\linewidth]{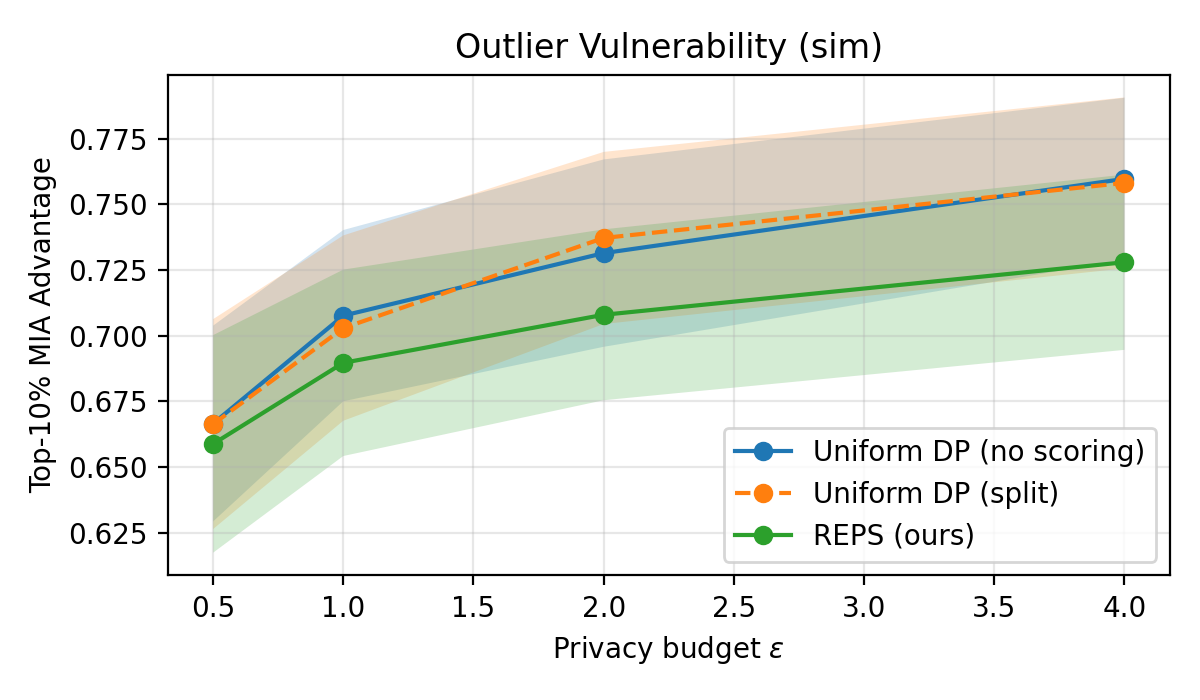}
\caption{Top-10\% MIA advantage versus privacy budget on the simulated dataset. Lower is better: REPS reduces attacker success against the most vulnerable records.}
\label{fig:mia-eps}
\end{figure}

\paragraph{Q3: Privacy-utility frontier.}
As $\varepsilon$ increases, all methods improve in utility (Figure~\ref{fig:utility-eps}). At low $\varepsilon$, utility constraints limit aggressive downweighting; at higher $\varepsilon$, more aggressive weighting is feasible.

\paragraph{Q4: Ablation---is targeting essential?}
Shuffling weights randomly across records (same distribution, random assignment) is less effective than targeted weighting on the simulated dataset, confirming that identifying high-risk records matters. Results on real-world benchmarks are mixed (Appendix~\ref{app:experiments}).

\section{Discussion}

\paragraph{Outliers vs.\ minorities.}
Outliers may represent marginalized populations whose accurate representation matters for fairness. Downweighting improves their privacy but may degrade fidelity for these groups. We recommend treating $\gamma$ as a policy knob balancing privacy equity against representation.

\paragraph{Scorer quality.}
Scorer quality (Figures~\ref{fig:scorer-corr}--\ref{fig:scorer-recall}) improves with $\varepsilon$ on simulated and Breast Cancer data. On Adult, scorer quality is high even at $\varepsilon=0.5$, yet MIA improvements are limited---suggesting effectiveness depends on whether the attack exploits the same statistics we control (e.g., higher-order interactions on Credit).

\paragraph{Scope and attack model.}
We focus on sufficient-statistics synthesis for clean per-instance analysis; DP-SGD bounds require integrating data-dependent analyses \citep{thudi2024gradients,feldman2021renyifilter}. Our distance-based MIA may underestimate sophisticated attackers \citep{carlini2022membership}; MAMA-MIA \citep{golob2025mamamia} suggests sufficient-statistics methods may be more vulnerable, which REPS mitigates by reducing outlier influence.

\paragraph{Broader applicability.}
The principle extends to DP-SGD, federated learning with outlier clients, and query answering---anywhere per-instance loss scales with influence.

\section{Conclusion}

We introduced risk-equalized DP synthetic data generation, a framework that protects high-risk outliers by reducing their influence on the learned synthesizer. Our two-stage mechanism---DP outlier scoring followed by risk-weighted DP learning---achieves formal $(\varepsilon,\delta)$-DP with improved per-instance privacy bounds for vulnerable records.

Experiments across four datasets with five random seeds show clear reductions in top-decile MIA advantage on the controlled outlier-injection benchmark, and smaller but sometimes positive gains on Breast Cancer at higher $\varepsilon$. On Adult and Credit, we observe mixed results under a fixed total budget, highlighting that influence control in a sufficient-statistics pipeline does not automatically translate to uniform empirical robustness against membership inference across all tabular domains. Ablations suggest that, when improvements occur, they come from correctly targeting high-risk records rather than random downweighting.

This work bridges the empirical observation that outliers face disproportionate privacy risk in synthetic data \citep{trindade2024syntheticoutliers,kulynych2022disparate} with mechanism design tools from per-instance differential privacy \citep{wang2019pidp}. The main practical takeaway is that influence-weighting provides a simple lever to shape per-instance guarantees; empirical outlier-robustness gains depend on scorer quality and on the synthesis/attack setting.

\section*{Broader Impact}
This paper advances differentially private synthetic data generation with the explicit goal of reducing disproportionate privacy risk for high-outlierness individuals. A primary positive impact is enabling safer data sharing in domains such as healthcare and finance where rare records are both valuable and vulnerable; risk-equalized mechanisms can help organizations deploy DP synthesis while providing more uniform protection across the population.
Potential negative impacts stem from the same mechanism: downweighting outliers may reduce fidelity for minority subpopulations and affect downstream fairness, particularly if synthetic data is used for policy or clinical decisions. We recommend treating the risk-weight aggressiveness as a policy knob and reporting subgroup utility alongside privacy metrics, and we caution against using synthetic data as the sole basis for high-stakes decisions without domain-specific auditing.

\bibliographystyle{plainnat}
\bibliography{ref2}

\appendix
\section{Proofs}
\label{app:proofs}

\begin{proof}[Proof of Theorem~\ref{thm:composition}]
Stage 1 releases $\widehat{s}_{1:n} \leftarrow \cW(D)$ and is $(\varepsilon_s,\delta_s)$-DP by assumption. The weights $w_{1:n} = g(\widehat{s}_{1:n})$ are deterministic post-processing of the Stage 1 output. Stage 2 takes $w_{1:n}$ as auxiliary input and releases $\theta \leftarrow \cA(D;w_{1:n})$; since $\cA(\cdot;w)$ is $(\varepsilon_t,\delta_t)$-DP for any fixed $w$, adaptive composition \citep{dworkroth2014book} yields that $(\widehat{s}_{1:n},\theta)$ is $(\varepsilon_s+\varepsilon_t,\delta_s+\delta_t)$-DP. Sampling $\tilde D \sim p_\theta^{\otimes m}$ is further post-processing.
\end{proof}

\begin{proof}[Proof of Corollary~\ref{cor:end-to-end}]
Stage 1 is $(\varepsilon_s,\delta_s)$-DP, hence also $(\varepsilon_s,\delta_s)$-pDP for every record. Conditional on any fixed weights $w$, Stage 2 admits a per-instance bound $(\varepsilon_{i,t}(D),\delta_t)$-pDP by assumption (e.g., Corollary~\ref{thm:pidp-gaussian} for a Gaussian release). Applying adaptive composition to the two-stage release yields $(\varepsilon_s+\varepsilon_{i,t}(D),\delta_s+\delta_t)$-pDP for record $i$.
\end{proof}

\begin{proof}[Proof of Theorem~\ref{thm:influence-privacy}]
Fix record $i$ and let $D' = D^{(-i)}$. The function $f$ has $\ell_2$ sensitivity $\Delta_2 = \norm{f(D)-f(D')}_2 = \alpha_i(f,D)$ for this particular neighboring pair. The Gaussian mechanism with noise $\mathcal{N}(0,\sigma^2 I_k)$ therefore satisfies $(\varepsilon_i(D),\delta)$-DP for this pair with $\varepsilon_i(D) \le \Delta_2 \sqrt{2\log(1.25/\delta)}/\sigma$ \citep{dworkroth2014book}.
\end{proof}

\begin{proof}[Proof of Lemma~\ref{lem:record-sens}]
By definition, $D^{(-i)}$ replaces $z_i$ with $\bot$ and we take $\clip(\phi(\bot),C)=0$, so
\[
f_w(D) - f_w(D^{(-i)}) = \frac{1}{n} w_i \cdot \clip(\phi(z_i),C).
\]
Since $\norm{\clip(\phi(z_i),C)}_2 \le C$ by definition of clipping, we have $\alpha_i(f_w,D) \le w_i C/n$.
\end{proof}

\begin{proof}[Proof of Corollary~\ref{thm:pidp-gaussian}]
Apply Theorem~\ref{thm:influence-privacy} to $f=f_w$ and use Lemma~\ref{lem:record-sens} to bound $\alpha_i(f_w,D) \le w_i C/n$.
\end{proof}

\begin{proof}[Proof of Proposition~\ref{prop:targeted}]
For $i \in \mathcal{H}$, Corollary~\ref{thm:pidp-gaussian} gives $\varepsilon_i(D) \le \frac{w_i C}{n\sigma}\sqrt{2\log(1.25/\delta_t)} = \varepsilon_{\mathrm{out}}$. For global sensitivity, the worst-case influence bound is $\max_i w_i C/n = C/n$ since $w_i=1$ for $i\notin\mathcal{H}$. The Gaussian mechanism calibrated to sensitivity $C/n$ is therefore $(\varepsilon,\delta_t)$-DP with $\varepsilon = \frac{C}{n\sigma}\sqrt{2\log(1.25/\delta_t)}$.
\end{proof}

\begin{proof}[Proof of Proposition~\ref{prop:bias}]
Write $\bar\phi(D)=\frac{1}{n}\sum_{i=1}^n \phi(z_i)$ and expand:
\[
\begin{aligned}
f_w(D) - \bar\phi(D)
&= \frac{1}{n}\sum_{i=1}^n \big(w_i\clip(\phi(z_i),C) - \phi(z_i)\big)\\
&= \frac{1}{n}\sum_{i=1}^n (w_i-1)\clip(\phi(z_i),C)\\
&\quad + \frac{1}{n}\sum_{i=1}^n \big(\clip(\phi(z_i),C)-\phi(z_i)\big).
\end{aligned}
\]
Taking norms and applying the triangle inequality, and using $w_i\in(0,1]$ so $\norm{(w_i-1)u}_2 = (1-w_i)\norm{u}_2$, yields the stated bound.
\end{proof}

\section{Extended Experimental Details}
\label{app:experiments}
This appendix provides reproducibility details (pipeline, DP accounting, attacks) and full numerical results (mean over 5 seeds).

\subsection{Implementation and preprocessing}
\paragraph{Feature processing.}
Continuous features are standardized using the training split (zero mean, unit variance) and clipped to $[-3,3]$ before computing DP sufficient statistics. Categorical features are treated as strings and one-hot encoded only for evaluation models/attacks.

\paragraph{Train/validation/test splits.}
We split each dataset into 56\%/14\%/30\% train/validation/test with stratification on the label. The validation split is used only for selecting $\gamma$ for REPS, and we report mean metrics over 5 random seeds.

\subsection{DP accounting and hyperparameters}
\paragraph{Budgets.}
We use $\delta = 1/n^2$ where $n$ is the training-set size. For REPS we split $(\varepsilon,\delta)$ across stages as $(\varepsilon_s,\delta_s)=(0.1\varepsilon,\delta/2)$ for scoring and $(\varepsilon_t,\delta_t)=(0.9\varepsilon,\delta/2)$ for synthesis. The DP uniform (no scoring) baseline uses the full $(\varepsilon,\delta)$ for synthesis.

\paragraph{Scoring.}
Stage 1 computes DP histogram rarity scores using 16 bins for continuous features (uniform bin edges over $[-3,3]$) and exact categories for categorical features, releasing all histograms jointly with Gaussian noise and $\ell_2$ sensitivity $\sqrt{d}$ for $d$ features. We report scorer quality by (i) Spearman correlation between DP scores and non-private $k$NN outlier scores, and (ii) Recall@Top-10\% (fraction of $k$NN top-decile records that are also in the DP-score top decile).

\paragraph{Weights and tuning.}
We map scores to weights via $w_i=\exp(-\gamma(s_i-t)_+)$ with threshold $t$ at the 90th percentile. We select $\gamma \in \{0.5,1.0,2.0,4.0\}$ by choosing the largest value whose validation TSTR AUROC is within 0.02 of the DP uniform (no scoring) baseline. This selection is \emph{not} included in the DP accounting; in a deployed mechanism, $\gamma$ must be fixed a priori or chosen using a DP selection procedure.

\subsection{Synthesis model (Instantiation A)}
\paragraph{Conditional Gaussian Naive Bayes.}
Our synthesizer releases a set of DP sufficient statistics and samples from a class-conditional model: it releases weighted label frequencies, categorical frequencies conditioned on the label, and (for continuous features) label-conditional first and second moments. Each per-record contribution is $\ell_2$ clipped, and Gaussian noise is added to each query block; the overall release is DP by composition.

\subsection{Attacks and metrics}
\paragraph{MIA and DOMIAS-style attack.}
We report membership inference advantage $2\cdot \mathrm{AUC}-1$ (absolute value). In addition to a distance-to-synthetic baseline, we implement a DOMIAS-style attack using a held-out reference set: for a query record, compute the ratio of average $k$NN distances to synthetic data versus to the reference set (we use $k=5$). We use the test split as the reference set of non-members.

\subsection{Per-decile risk curves and scorer diagnostics}
\begin{figure*}[t]
\centering
\includegraphics[width=0.48\linewidth]{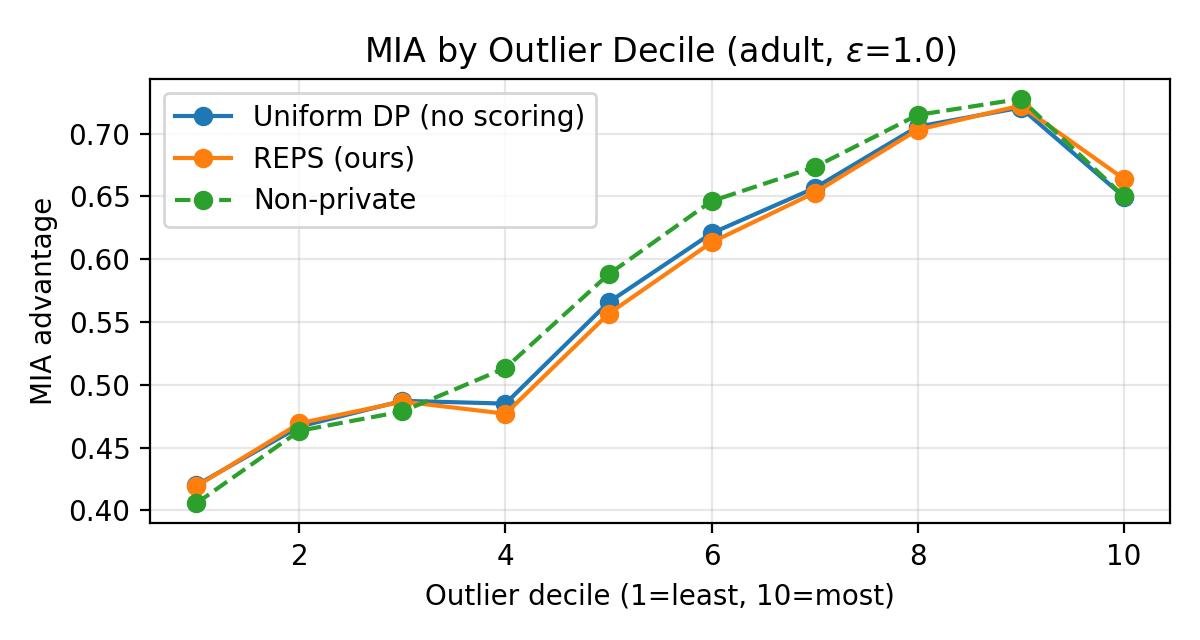}\hfill
\includegraphics[width=0.48\linewidth]{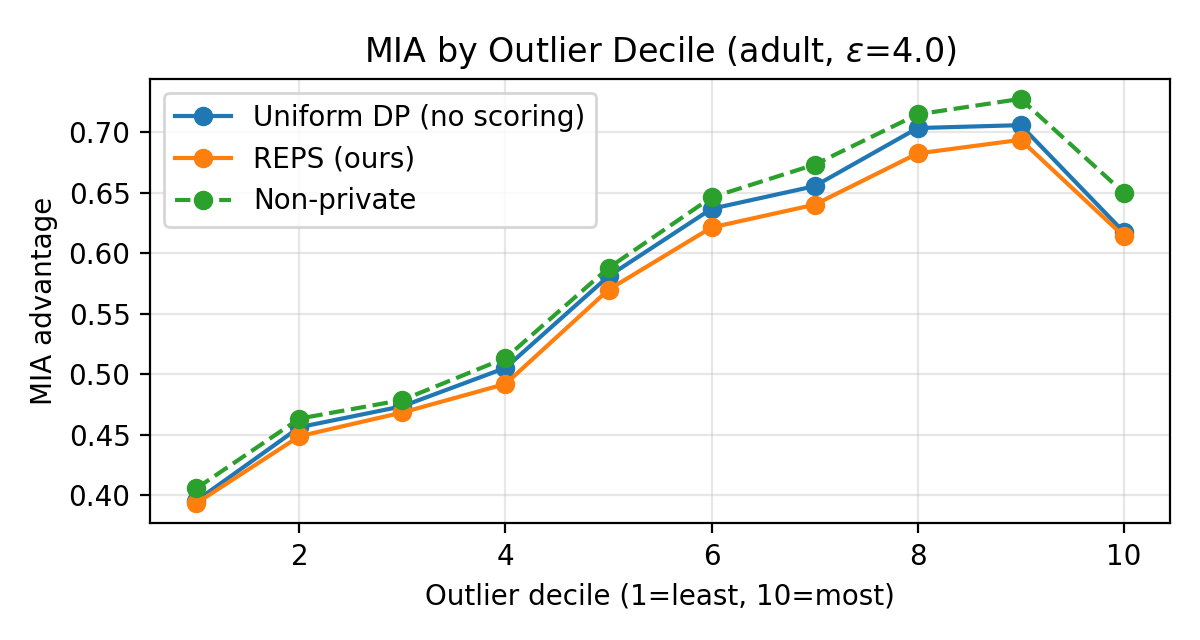}\\
\includegraphics[width=0.48\linewidth]{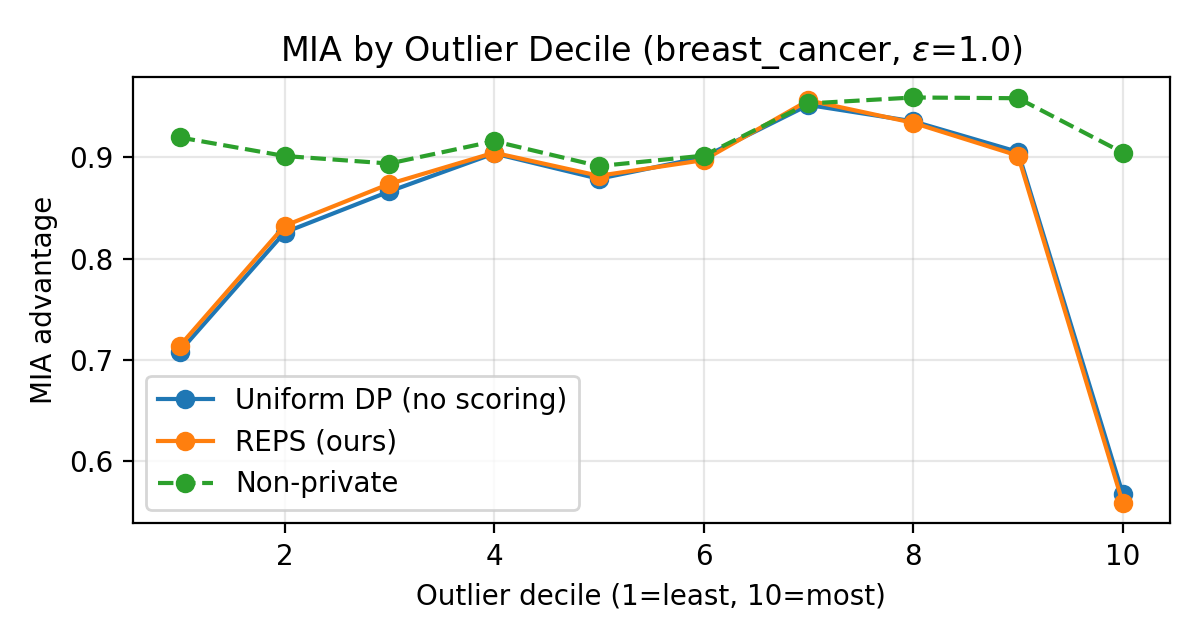}\hfill
\includegraphics[width=0.48\linewidth]{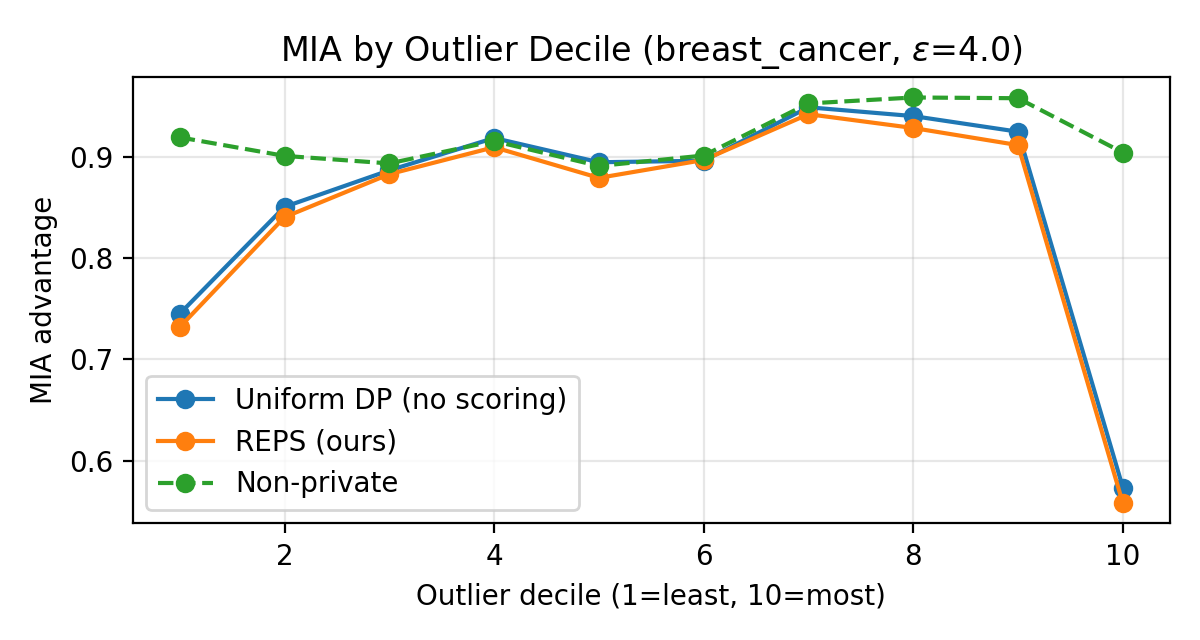}
\caption{MIA advantage by outlier decile (DOMIAS-style), for Adult and Breast Cancer at $\varepsilon\in\{1.0,4.0\}$. Lower curves indicate better privacy; the gap between decile 10 and decile 1 reflects privacy inequity.}
\label{fig:decile-curves}
\end{figure*}

\begin{figure*}[t]
\centering
\includegraphics[width=0.48\linewidth]{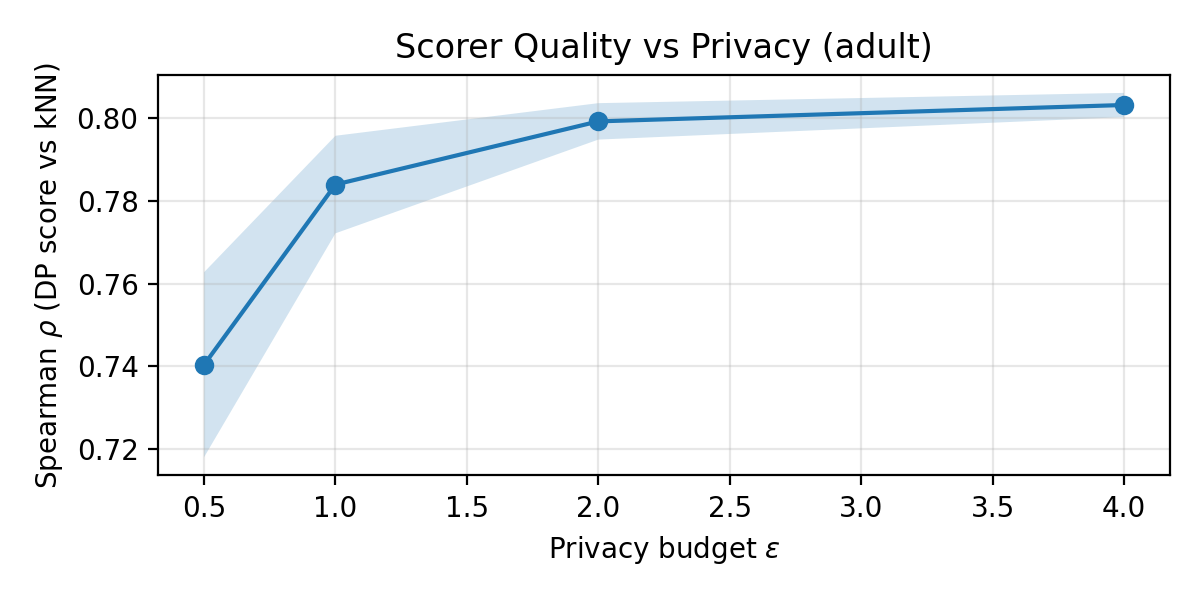}\hfill
\includegraphics[width=0.48\linewidth]{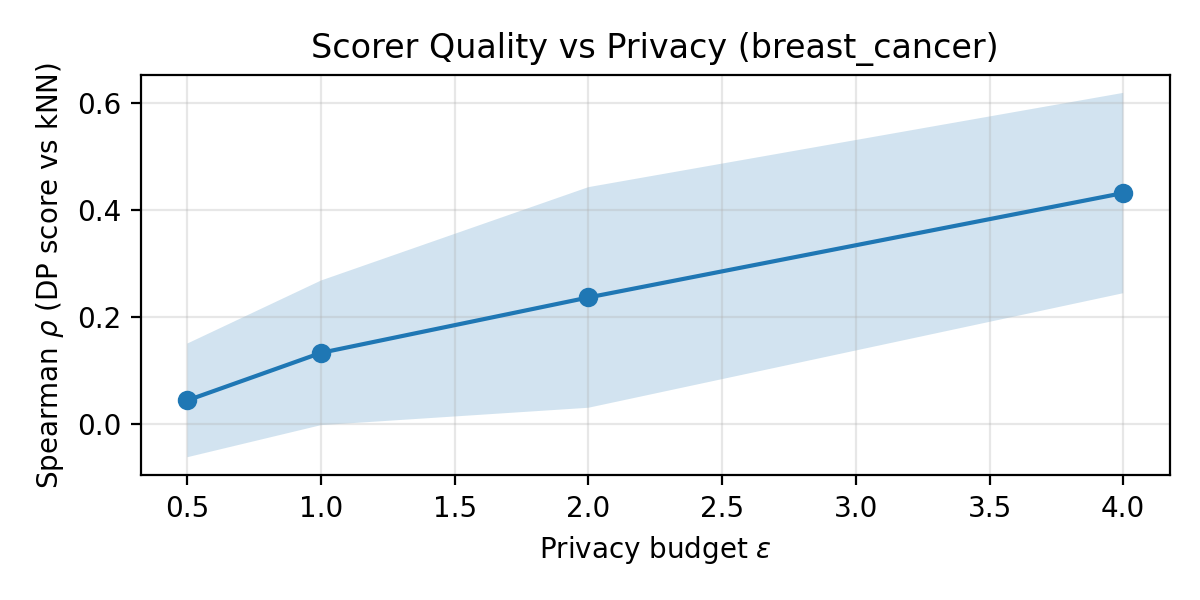}
\caption{Scorer quality versus $\varepsilon$: Spearman correlation between DP histogram scores and non-private $k$NN outlier scores. Higher correlation indicates the DP scorer successfully identifies true outliers.}
\label{fig:scorer-corr}
\end{figure*}

\begin{figure*}[t]
\centering
\includegraphics[width=0.48\linewidth]{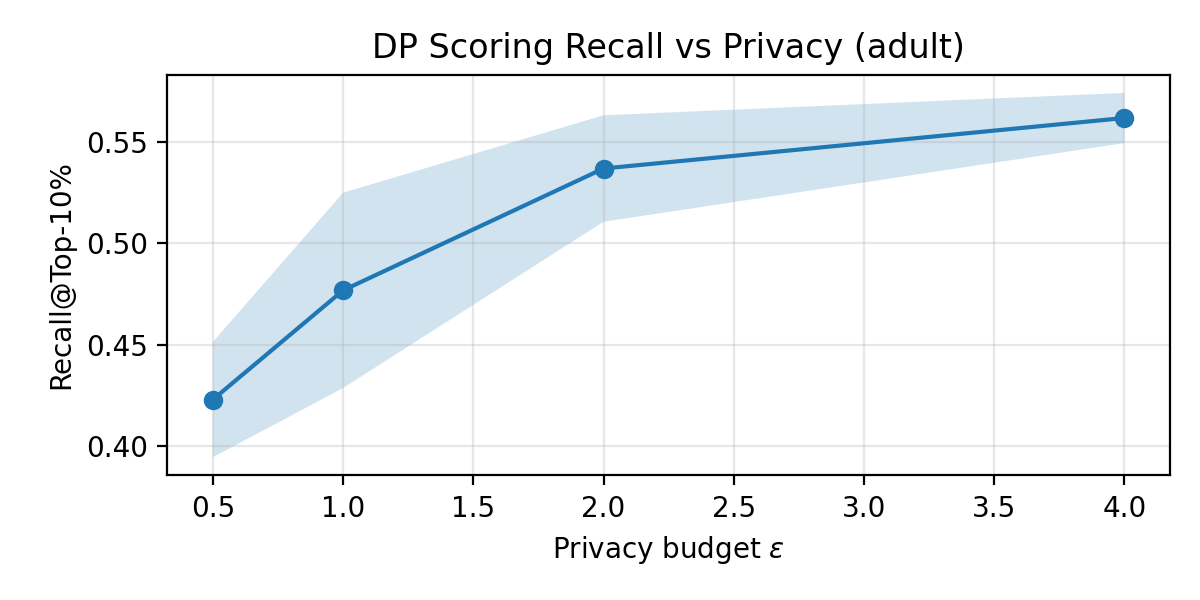}\hfill
\includegraphics[width=0.48\linewidth]{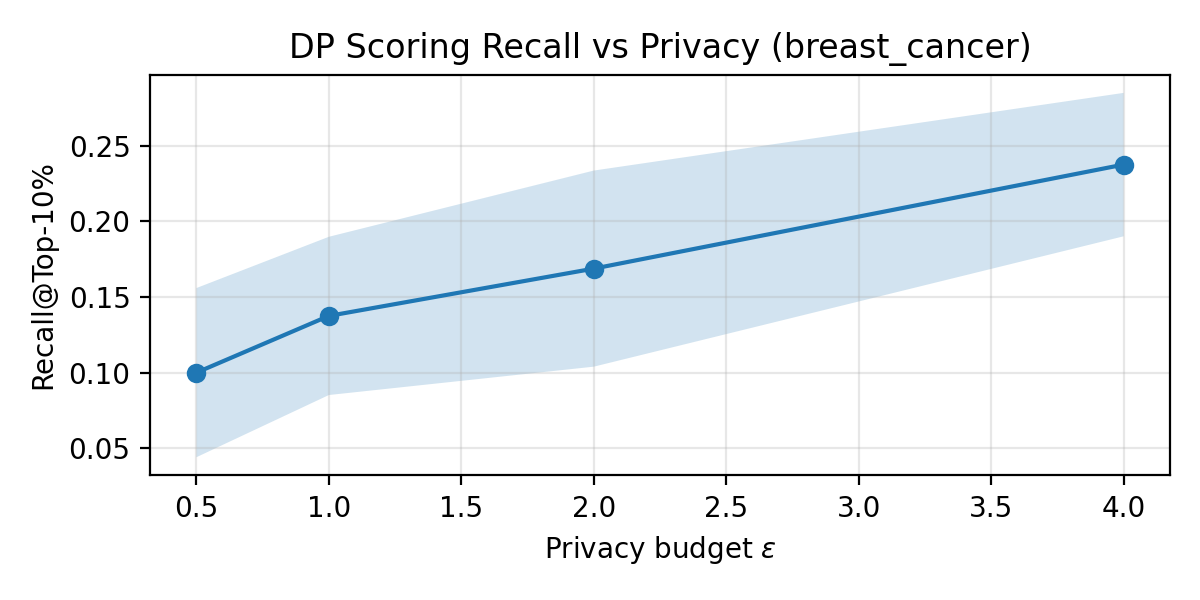}
\caption{Scorer quality versus $\varepsilon$: Recall@Top-10\% between the DP-score top decile and the $k$NN top decile. Higher recall means the DP scorer correctly flags the same high-risk records as the oracle.}
\label{fig:scorer-recall}
\end{figure*}

\begin{figure*}[t]
\centering
\includegraphics[width=0.48\linewidth]{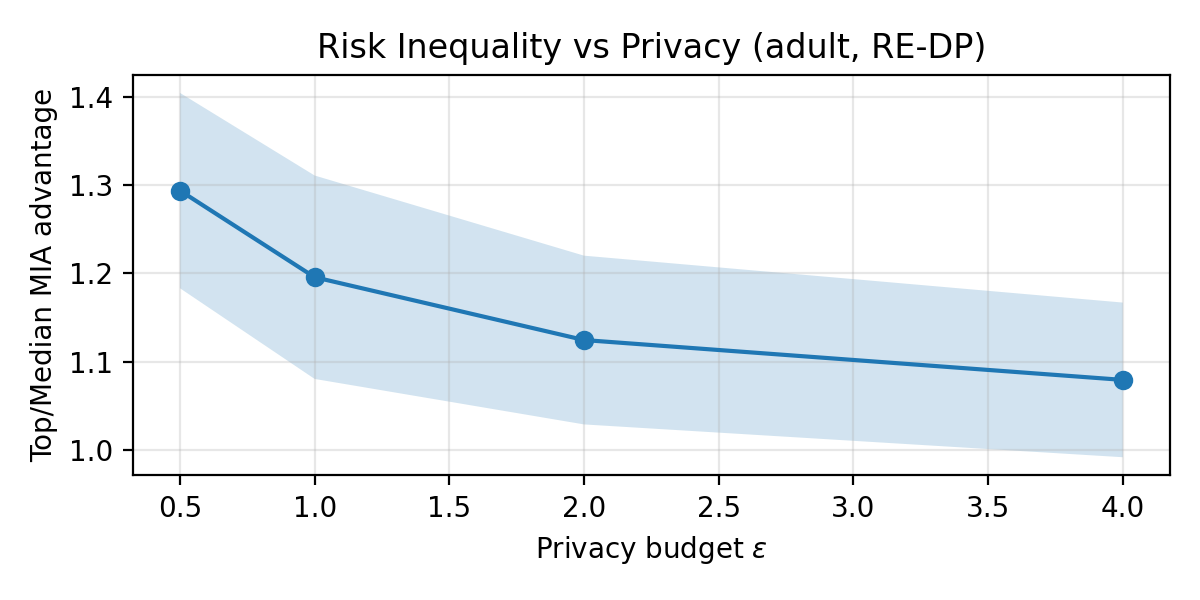}\hfill
\includegraphics[width=0.48\linewidth]{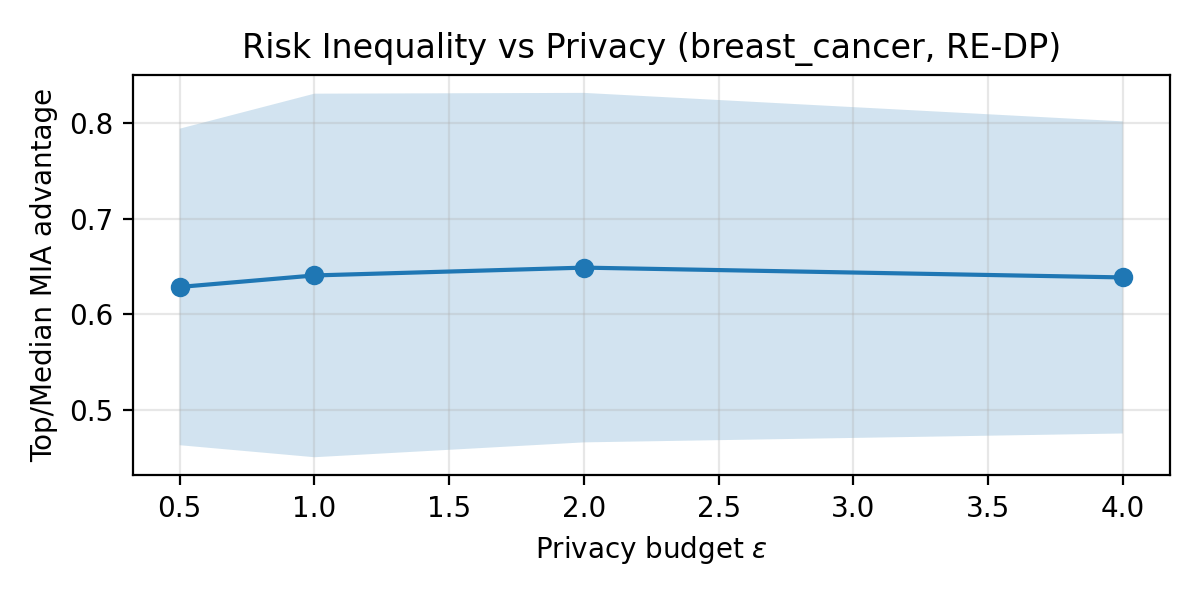}
\caption{Risk inequality ratio (top-decile MIA advantage divided by median-decile advantage) versus $\varepsilon$. Values closer to 1 indicate more equalized privacy protection across the population.}
\label{fig:ineq}
\end{figure*}

\begin{table*}[t]
\centering
\small
\caption{Results on sim dataset (mean over 5 seeds). Bold indicates best DP method per $\varepsilon$.}
\label{tab:sim:results}
\vspace{0.5em}
\begin{tabular}{@{}lcccccc@{}}
\toprule
$\varepsilon$ & Method & $\gamma$ & TSTR $\uparrow$ & MIA Adv $\downarrow$ & Top-10\% Adv $\downarrow$ & Top/Med $\downarrow$ \\
\midrule
\multirow{6}{*}{0.5}
 & Non-private & -- & 0.772 & 0.015 & 0.841 & 1.167 \\
 & DP Uniform (no scoring) & 0.0 & 0.604 & 0.006 & 0.667 & 1.025 \\
 & DP Uniform (split) & 0.0 & 0.589 & 0.007 & 0.666 & 1.015 \\
 & REPS (ours) & 0.5 & 0.569 & 0.008 & \textbf{0.659} & 1.009 \\
 & Random Downwt. & 0.5 & 0.581 & \textbf{0.005} & 0.663 & 1.005 \\
 & Hard Removal & 0.5 & \textbf{0.627} & \textbf{0.005} & 0.667 & \textbf{0.999} \\
\midrule
\multirow{6}{*}{1.0}
 & Non-private & -- & 0.772 & 0.015 & 0.841 & 1.167 \\
 & DP Uniform (no scoring) & 0.0 & \textbf{0.664} & 0.016 & 0.708 & 1.097 \\
 & DP Uniform (split) & 0.0 & 0.656 & 0.015 & 0.703 & 1.084 \\
 & REPS (ours) & 4.0 & 0.647 & \textbf{0.008} & \textbf{0.690} & 1.050 \\
 & Random Downwt. & 4.0 & 0.653 & 0.012 & 0.705 & 1.090 \\
 & Hard Removal & 4.0 & 0.627 & 0.013 & 0.695 & \textbf{1.024} \\
\midrule
\multirow{6}{*}{2.0}
 & Non-private & -- & 0.772 & 0.015 & 0.841 & 1.167 \\
 & DP Uniform (no scoring) & 0.0 & \textbf{0.704} & 0.008 & 0.731 & 1.088 \\
 & DP Uniform (split) & 0.0 & 0.697 & 0.006 & 0.737 & 1.091 \\
 & REPS (ours) & 4.0 & 0.689 & \textbf{0.005} & 0.708 & 1.086 \\
 & Random Downwt. & 4.0 & 0.691 & 0.010 & 0.718 & 1.090 \\
 & Hard Removal & 4.0 & 0.660 & 0.015 & \textbf{0.702} & \textbf{1.064} \\
\midrule
\multirow{6}{*}{4.0}
 & Non-private & -- & 0.772 & 0.015 & 0.841 & 1.167 \\
 & DP Uniform (no scoring) & 0.0 & \textbf{0.734} & \textbf{0.010} & 0.760 & 1.108 \\
 & DP Uniform (split) & 0.0 & 0.729 & 0.015 & 0.758 & 1.109 \\
 & REPS (ours) & 4.0 & 0.720 & 0.014 & 0.728 & 1.097 \\
 & Random Downwt. & 4.0 & 0.724 & 0.012 & 0.752 & 1.128 \\
 & Hard Removal & 4.0 & 0.690 & 0.013 & \textbf{0.720} & \textbf{1.089} \\
\bottomrule
\end{tabular}
\end{table*}

\begin{table*}[t]
\centering
\small
\caption{Results on breast\_cancer dataset (mean over 5 seeds). Bold indicates best DP method per $\varepsilon$.}
\label{tab:breast_cancer:results}
\vspace{0.5em}
\begin{tabular}{@{}lcccccc@{}}
\toprule
$\varepsilon$ & Method & $\gamma$ & TSTR $\uparrow$ & MIA Adv $\downarrow$ & Top-10\% Adv $\downarrow$ & Top/Med $\downarrow$ \\
\midrule
\multirow{6}{*}{0.5}
 & Non-private & -- & 0.989 & 0.038 & 0.904 & 1.016 \\
 & DP Uniform (no scoring) & 0.0 & \textbf{0.751} & \textbf{0.032} & 0.572 & 0.657 \\
 & DP Uniform (split) & 0.0 & 0.721 & 0.046 & 0.548 & \textbf{0.624} \\
 & REPS (ours) & 0.5 & 0.705 & 0.063 & 0.550 & 0.629 \\
 & Random Downwt. & 0.5 & 0.708 & 0.055 & \textbf{0.546} & 0.630 \\
 & Hard Removal & 0.5 & 0.707 & 0.054 & 0.550 & 0.628 \\
\midrule
\multirow{6}{*}{1.0}
 & Non-private & -- & 0.989 & 0.038 & 0.904 & 1.016 \\
 & DP Uniform (no scoring) & 0.0 & \textbf{0.914} & 0.049 & 0.568 & 0.653 \\
 & DP Uniform (split) & 0.0 & 0.910 & 0.047 & 0.558 & \textbf{0.640} \\
 & REPS (ours) & 4.0 & 0.821 & 0.040 & 0.559 & 0.641 \\
 & Random Downwt. & 4.0 & 0.817 & 0.047 & 0.564 & 0.652 \\
 & Hard Removal & 4.0 & 0.781 & \textbf{0.022} & \textbf{0.554} & 0.647 \\
\midrule
\multirow{6}{*}{2.0}
 & Non-private & -- & 0.989 & 0.038 & 0.904 & 1.016 \\
 & DP Uniform (no scoring) & 0.0 & \textbf{0.959} & 0.042 & 0.585 & 0.661 \\
 & DP Uniform (split) & 0.0 & 0.958 & 0.027 & 0.573 & \textbf{0.643} \\
 & REPS (ours) & 4.0 & 0.946 & 0.034 & 0.568 & 0.649 \\
 & Random Downwt. & 4.0 & 0.954 & 0.034 & \textbf{0.562} & \textbf{0.643} \\
 & Hard Removal & 4.0 & 0.946 & \textbf{0.026} & 0.574 & 0.655 \\
\midrule
\multirow{6}{*}{4.0}
 & Non-private & -- & 0.989 & 0.038 & 0.904 & 1.016 \\
 & DP Uniform (no scoring) & 0.0 & \textbf{0.966} & \textbf{0.023} & 0.574 & 0.644 \\
 & DP Uniform (split) & 0.0 & 0.965 & 0.037 & 0.585 & 0.677 \\
 & REPS (ours) & 4.0 & 0.963 & 0.034 & \textbf{0.558} & \textbf{0.639} \\
 & Random Downwt. & 4.0 & 0.961 & 0.035 & 0.565 & 0.645 \\
 & Hard Removal & 4.0 & 0.965 & 0.045 & 0.565 & 0.645 \\
\bottomrule
\end{tabular}
\end{table*}

\begin{table*}[t]
\centering
\small
\caption{Results on adult dataset (mean over 5 seeds). Bold indicates best DP method per $\varepsilon$.}
\label{tab:adult:results}
\vspace{0.5em}
\begin{tabular}{@{}lcccccc@{}}
\toprule
$\varepsilon$ & Method & $\gamma$ & TSTR $\uparrow$ & MIA Adv $\downarrow$ & Top-10\% Adv $\downarrow$ & Top/Med $\downarrow$ \\
\midrule
\multirow{6}{*}{0.5}
 & Non-private & -- & 0.890 & 0.008 & 0.650 & 1.107 \\
 & DP Uniform (no scoring) & 0.0 & 0.853 & \textbf{0.007} & \textbf{0.683} & \textbf{1.246} \\
 & DP Uniform (split) & 0.0 & 0.851 & \textbf{0.007} & 0.689 & 1.266 \\
 & REPS (ours) & 4.0 & 0.845 & 0.008 & 0.695 & 1.294 \\
 & Random Downwt. & 4.0 & 0.850 & \textbf{0.007} & 0.695 & 1.285 \\
 & Hard Removal & 4.0 & \textbf{0.855} & \textbf{0.007} & 0.722 & 1.351 \\
\midrule
\multirow{6}{*}{1.0}
 & Non-private & -- & 0.890 & 0.008 & 0.650 & 1.107 \\
 & DP Uniform (no scoring) & 0.0 & 0.863 & \textbf{0.007} & \textbf{0.650} & \textbf{1.151} \\
 & DP Uniform (split) & 0.0 & 0.860 & 0.008 & 0.656 & 1.166 \\
 & REPS (ours) & 4.0 & 0.854 & 0.008 & 0.664 & 1.196 \\
 & Random Downwt. & 4.0 & 0.859 & \textbf{0.007} & 0.662 & 1.180 \\
 & Hard Removal & 4.0 & \textbf{0.864} & 0.009 & 0.687 & 1.245 \\
\midrule
\multirow{6}{*}{2.0}
 & Non-private & -- & 0.890 & 0.008 & 0.650 & 1.107 \\
 & DP Uniform (no scoring) & 0.0 & \textbf{0.878} & \textbf{0.005} & \textbf{0.627} & \textbf{1.087} \\
 & DP Uniform (split) & 0.0 & 0.876 & 0.006 & 0.632 & 1.098 \\
 & REPS (ours) & 4.0 & 0.870 & 0.008 & 0.635 & 1.124 \\
 & Random Downwt. & 4.0 & 0.874 & 0.006 & 0.636 & 1.110 \\
 & Hard Removal & 4.0 & 0.870 & 0.008 & 0.669 & 1.171 \\
\midrule
\multirow{6}{*}{4.0}
 & Non-private & -- & 0.890 & 0.008 & 0.650 & 1.107 \\
 & DP Uniform (no scoring) & 0.0 & \textbf{0.889} & \textbf{0.006} & 0.618 & 1.064 \\
 & DP Uniform (split) & 0.0 & 0.887 & 0.007 & \textbf{0.613} & \textbf{1.060} \\
 & REPS (ours) & 4.0 & 0.882 & 0.007 & 0.614 & 1.079 \\
 & Random Downwt. & 4.0 & 0.884 & 0.007 & 0.615 & 1.062 \\
 & Hard Removal & 4.0 & 0.880 & 0.008 & 0.632 & 1.110 \\
\bottomrule
\end{tabular}
\end{table*}

\begin{table*}[t]
\centering
\small
\caption{Results on credit dataset (mean over 5 seeds). Bold indicates best DP method per $\varepsilon$.}
\label{tab:credit:results}
\vspace{0.5em}
\begin{tabular}{@{}lcccccc@{}}
\toprule
$\varepsilon$ & Method & $\gamma$ & TSTR $\uparrow$ & MIA Adv $\downarrow$ & Top-10\% Adv $\downarrow$ & Top/Med $\downarrow$ \\
\midrule
\multirow{6}{*}{0.5}
 & Non-private & -- & 0.781 & 0.058 & 0.965 & 0.982 \\
 & DP Uniform (no scoring) & 0.0 & \textbf{0.544} & 0.051 & \textbf{0.924} & 0.968 \\
 & DP Uniform (split) & 0.0 & 0.541 & 0.054 & 0.927 & 0.970 \\
 & REPS (ours) & 4.0 & 0.537 & 0.054 & 0.927 & 0.969 \\
 & Random Downwt. & 4.0 & 0.539 & \textbf{0.041} & 0.925 & 0.966 \\
 & Hard Removal & 4.0 & 0.535 & 0.053 & 0.925 & \textbf{0.963} \\
\midrule
\multirow{6}{*}{1.0}
 & Non-private & -- & 0.781 & 0.058 & 0.965 & 0.982 \\
 & DP Uniform (no scoring) & 0.0 & 0.570 & \textbf{0.036} & 0.937 & 0.963 \\
 & DP Uniform (split) & 0.0 & \textbf{0.574} & \textbf{0.036} & 0.930 & 0.964 \\
 & REPS (ours) & 4.0 & 0.573 & 0.058 & 0.934 & 0.975 \\
 & Random Downwt. & 4.0 & 0.571 & 0.049 & 0.928 & 0.967 \\
 & Hard Removal & 4.0 & 0.570 & 0.039 & \textbf{0.925} & \textbf{0.957} \\
\midrule
\multirow{6}{*}{2.0}
 & Non-private & -- & 0.781 & 0.058 & 0.965 & 0.982 \\
 & DP Uniform (no scoring) & 0.0 & 0.643 & 0.042 & 0.950 & 0.974 \\
 & DP Uniform (split) & 0.0 & 0.636 & 0.042 & 0.948 & \textbf{0.970} \\
 & REPS (ours) & 0.5 & 0.641 & 0.033 & 0.949 & 0.975 \\
 & Random Downwt. & 0.5 & 0.629 & \textbf{0.024} & 0.946 & 0.977 \\
 & Hard Removal & 0.5 & \textbf{0.648} & 0.028 & \textbf{0.944} & 0.974 \\
\midrule
\multirow{6}{*}{4.0}
 & Non-private & -- & 0.781 & 0.058 & 0.965 & 0.982 \\
 & DP Uniform (no scoring) & 0.0 & \textbf{0.689} & 0.056 & 0.965 & 0.989 \\
 & DP Uniform (split) & 0.0 & 0.682 & \textbf{0.040} & 0.962 & 0.986 \\
 & REPS (ours) & 4.0 & 0.682 & \textbf{0.040} & 0.961 & 0.984 \\
 & Random Downwt. & 4.0 & 0.677 & 0.041 & \textbf{0.960} & \textbf{0.983} \\
 & Hard Removal & 4.0 & 0.682 & 0.044 & 0.961 & 0.986 \\
\bottomrule
\end{tabular}
\end{table*}

\end{document}